%% file: main.tex
\documentclass{article}
\usepackage{microtype}
\usepackage{graphicx}
\usepackage{subcaption}
\usepackage{enumitem}
\usepackage{duckuments}
\usepackage[most]{tcolorbox}
\usepackage{hyperref}
\usepackage{multirow}
\usepackage{booktabs}
\usepackage{xcolor,xspace,soul}
\input{jay_color_utils}

\def\rvx{{\mathbf{x}}}

\def\rvz{{\mathbf{z}}}

\newcommand{\mask}{\mathbf{m}}

\usepackage[accepted]{icml2026}

\usepackage{amsmath}
\usepackage{amssymb}
\usepackage{mathtools}
\usepackage{amsthm}
\usepackage{lipsum}
\usepackage[capitalize,noabbrev]{cleveref}

\theoremstyle{plain}
\newtheorem{theorem}{Theorem}
\newtheorem{lemma}[theorem]{Lemma}
\newtheorem{proposition}[theorem]{Proposition}

\theoremstyle{definition}
\newtheorem{assumption}[theorem]{Assumption}
\newtheorem{example}[theorem]{Example}
\theoremstyle{remark}
\newtheorem{remark}[theorem]{Remark}
\renewcommand{\P}{\mathbb{P}}
\usepackage[textsize=tiny]{todonotes}

\renewcommand{\algorithmicendif}{}
\icmltitlerunning{Stop Training for the Worst: Progressive Unmasking Accelerates Masked Diffusion Training}

\begin{document}

\twocolumn[
  \icmltitle{Stop Training for the Worst:\\ Progressive Unmasking Accelerates Masked Diffusion Training}
  \icmlsetsymbol{equal}{*}
  \begin{icmlauthorlist}
    \icmlauthor{Jaeyeon Kim}{equal,yyy}
    \icmlauthor{Jonathan Geuter}{equal,yyy,comp}
    \icmlauthor{David Alvarez-Melis}{yyy,comp}
    \icmlauthor{Sham Kakade}{yyy,comp}
    \icmlauthor{Sitan Chen}{yyy}
  \end{icmlauthorlist}
  \icmlaffiliation{yyy}{Harvard University}
  \icmlaffiliation{comp}{Kempner Institute}
  \icmlcorrespondingauthor{Jaeyeon Kim}{jaeyeon\_kim@g.harvard.edu}
  \icmlkeywords{Machine Learning, ICML}

  \vskip 0.3in
]
\printAffiliationsAndNotice{\icmlEqualContribution} 
\begin{abstract}
Masked Diffusion Models (MDMs) have emerged as a promising approach for generative modeling in discrete spaces. By generating sequences in any order and allowing for parallel decoding, they enable fast inference and strong performance on non-causal tasks. However, this flexibility comes with a \emph{training complexity} trade-off: MDMs train on an exponentially large set of masking patterns, which is not only computationally expensive, but also creates a train--test mismatch between the random masks used in training and the highly structured masks induced by inference-time unmasking. In this work, we propose Progressive UnMAsking (PUMA), a simple modification of the forward masking process that aligns training-time and inference-time masking patterns, thereby focusing optimization on \emph{inference-aligned masks} and speeding up training. Empirically, PUMA speeds up pretraining at the 125M scale by $\approx 2.3\times$ and offers complementary advantages on top of common recipes like autoregressive initialization. We open-source our codebase at \url{https://github.com/JaeyeonKim01/PUMA}.
\end{abstract}
\input{1_Introduction}
\input{2_Prelim}
\input{3_PUMA}
\input{4_Experiment}
\input{5_Conclusion}

\bibliography{main}
\bibliographystyle{icml2026}
\newpage
\appendix
\onecolumn
\input{A_Proofs}

\input{B_Training}
\end{document}

%% file: jay_color_utils.tex
\definecolor{xblue}{HTML}{4169E1}
\definecolor{xgreen}{HTML}{036C3A}
\definecolor{xpurple}{HTML}{9838B1}
\definecolor{xslategray}{HTML}{70818F}
\definecolor{xorange}{HTML}{FF8C00}
\definecolor{xcyan}{HTML}{06AEEF}
\definecolor{xred}{HTML}{FF0000}
\definecolor{xgray}{HTML}{808080}
\definecolor{xxgreen}{HTML}{009F86}
\definecolor{xsienna}{HTML}{8B4512}
\definecolor{xxgreen}{HTML}{009F86}
\definecolor{xxpurple}{HTML}{623E99}
\definecolor{xolive}{HTML}{556B2F}

\newcommand{\xblue}[1]{\textcolor{xblue}{#1}}
\newcommand{\xorange}[1]{\textcolor{xorange}{#1}}

\newcommand{\xslategray}[1]{\textcolor{xslategray}{#1}}

\newcommand{\xxgreen}[1]{\textcolor{xxgreen}{#1}}

\newcommand{\coloredhl}[2]{{\textcolor{#1}{\sethlcolor{#1!10}\hl{#2}}}\xspace}
\newcommand{\coloredul}[2]{\textcolor{#1}{\underline{#2}}\xspace}

%% file: 1_Introduction.tex
\section{Introduction}
\begin{figure}[t] \centering \includegraphics[width=\linewidth, trim=-0.2cm 0.2cm 0cm 0.cm]{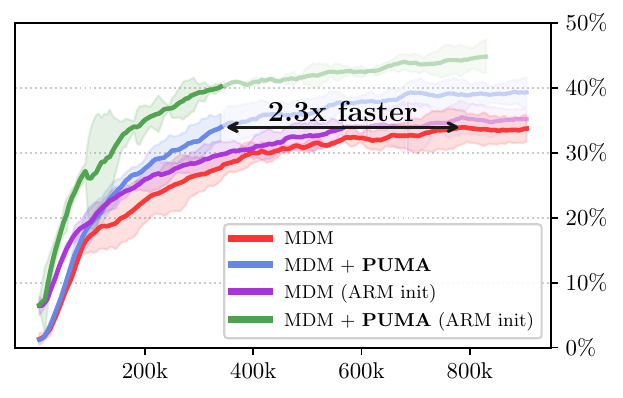} \caption{\textbf{PUMA} (\coloredul{xblue}{blue}) accelerates Masked Diffusion Model training (\coloredul{xred}{red}) by changing the forward process. Experiment on 125M-scale, from scratch trained on TinyGSM~\cite{liu2023tinygsm}. Moreover, PUMA is compatible with autoregressive initialization (\coloredul{xxpurple}{purple curve} over \coloredul{xxgreen}{green curve}). Error bars are 95$\%$ CIs.} \label{fig:main} \vspace{-0.20in} \end{figure}

In recent years, Masked Diffusion Models (MDMs)~\cite{lou2023discrete,shi2024simplified,sahoo2024simple} have emerged as a compelling alternative to autoregressive models for discrete generative modeling, driven by rapid scaling efforts from 7B~\cite{dream2025,nie2025large,gong2025diffucoder,song2025seed} to 100B parameters~\cite{bie2025llada20scalingdiffusionlanguage}, as well as industry-scale deployments~\cite{labs2025mercury,gemini2025diffusion}. Much of their appeal comes from inference-time advantages: parallel decoding speeds up generation, while any-order inference better accommodates tasks that require non-autoregressive generation.

% paragraph 2: MDM training, mention the train-test gap (masking patterns)
At the same time, MDMs come with a well-known trade-off~\cite{kim2025train}, \textbf{Train for the Worst}: to have any-order generation ability, they must train on unmasking tasks with an exponential number of masking patterns, substantially increasing the complexity of training. Under this paradigm, compute is spread across all masking patterns, while at inference, the masking patterns occupy a negligible fraction of the ones seen during training. This is because adaptive inference prioritizes positions for which the model's predictions are most certain, causing unmasking decisions to concentrate on a small subset of masking patterns.

\emph{Yet}, this mismatch can be turned into an opportunity: training complexity can be reduced by focusing on inference-aligned masking patterns.
However, despite training efficiency being essential to scaling MDMs, recent work has largely emphasized inference, leaving \textbf{MDM training comparatively underexplored}.

In this work, we propose \textbf{\xblue{P}rogressive \xblue{U}n\xblue{MA}sking (\xblue{PUMA})}.
With a \emph{modified forward process}, PUMA generates training examples whose masking patterns \textbf{\emph{\xblue{provably match}}} those observed at inference, thereby resolving the train-test mismatch. To our knowledge, PUMA is the first approach to \textbf{accelerate discrete diffusion training} by redesigning the forward masking process itself, rather than architectural~\cite{arriola2025encoder,ni2025openmoe2} or recipe-level remedies, such as autoregressive initialization~\cite {bie2025llada20scalingdiffusionlanguage,dream2025,gong2025diffucoder,gemini2025diffusion} or block-size 
curriculum~\cite{bie2025llada20scalingdiffusionlanguage,fan2026stablediffcoderpushingfrontiercode}, or loss reweighting~\cite{shi2025demystifying}.

% paragraph 4: experiments
Empirically, PUMA accelerates MDM pretraining by up to $2.3\times$ in iteration--accuracy comparisons at the 125M scale, while introducing only \emph{one} additional hyperparameter and incurring no forward-pass overhead. We further demonstrate that PUMA remains effective in more realistic training scenarios, including autoregressive initialization, where we observe $4.0\times$ speedup, and post-training of a 7B-scale pretrained MDM.

%% file: 2_Prelim.tex
\section{Preliminaries} \label{sec:prelim}
We begin by reviewing the training and inference of Masked Diffusion Models (MDMs).

\textbf{Notation.} Assume we aim to learn the distribution $p_{\mathrm{data}}$ over sequences of length $L$ with finite vocabulary $\mathcal{V}$. Denote by $\rvx^i$ the $i$-th element of a given sequence $\rvx=(\rvx^1, \dots, \rvx^L)$, and let $\Delta(\mathcal{V})$ be the simplex of probability distributions over $\mathcal{V}$.

\textbf{MDM training.} MDMs~\cite{lou2023discrete,shi2024simplified,sahoo2024simple} introduce an auxiliary mask token $\mask$. At a high level, they learn the posterior marginal distributions over $\mathcal{V}$ at each mask token, conditioned on a partially masked sequence. Concretely, an MDM defines a \emph{forward masking process}: for a given $\rvx_0 \sim p_{\mathrm{data}}$, draw a time step $t\sim \mathrm{Unif}[0,1]$, then independently replace each token $\rvx_0^i$ by $\mask$ with probability $t$.\footnote{We present the masking process using a linear schedule.} This results in a partially masked sequence $\rvx_t\in(\mathcal{V}\cup \{\mask\})^L$. 

This resulting joint distribution over $(\rvx_0,t,\rvx_t)$ induces the coordinate-wise posterior $p(\rvx_0^i = \cdot\,|\, \rvx_t,t)$, which we refer to as \coloredhl{xblue}{\emph{unmasking posterior}}: the distribution over clean token at masked position $i$ given $(\rvx_t,t)$. The masking process above has an intriguing property--the unmasking posterior is \emph{time-agnostic}~\cite{ou2024your,zheng2024masked}; it depends only on the observed (partially masked) sequence $\rvx_t$, not the time step $t$; therefore, we drop the time step in the notation and write $p(\rvx_0^i = \cdot\,|\, \rvx_t) = p(\rvx_0^i = \cdot\,|\, \rvz)$, adopting the notation $\rvz=\rvx_t$.

To learn this unmasking posterior, MDMs employ a neural network $f_\theta$ that takes a sequence $\rvz$ as an input and outputs a tensor of shape $|\mathcal{V}|\times L$. Its $i$-th column $f_\theta^i(\cdot\,|\,\rvz)\in\Delta(\mathcal{V})$ learns the unmasking posterior $f_\theta^i(v\,|\,\rvz) \approx p(\rvx_0^i=v\,|\,\rvz)$. To train $f_\theta$, MDMs minimize the following cross-entropy loss summed over all masked indices:
\begin{equation*}
    \mathcal{L}(\theta) =  \mathbb{E}_{\xblue{(\rvx_0,t,\rvx_t)}}\biggl[\frac{1}{t}\sum_{i\colon \rvx_t^i=\mask} -\log f_\theta^i(\rvx_0^i \,|\,\rvx_t) \biggr]\,.
\end{equation*}
where the expectation is taken over the joint distribution of \xblue{$(\rvx_0,t,\rvx_t)$} defined above. As desired, the unique minimizer $f_{\theta^\star}$ of the loss $\mathcal{L}(\theta)$ satisfies $f_{\theta^\star}^i(v\,|\,\rvz)= p(\rvx_0^i=v\,|\, \rvz)$. This viewpoint connects the MDM training to any-order masked language models such as BERT~\citep{devlin2019bert}, which also model the posterior marginals at masked positions.

\textbf{MDM inference.} MDM inference starts from a fully masked length-$L$ sequence $\rvx_1 = (\mask,\dots,\mask)$ and proceeds over a monotonically decreasing grid of times $t_0=1>\dots>t_l = 0$. At each step $t_j$, given a partially masked sequence $\rvx_{t_j}\in(\mathcal{V}\cup \{\mask\})^L$, inference proceeds by two steps to obtain $\rvx_{t_{j+1}}$:
\vspace{-0.05in}
\begin{itemize}[leftmargin=1.4em]
    \item[(a)] Choose a subset of masked positions $\mathcal{S}$ to unmask.
    \item[(b)] For all $i \in \mathcal{S}$, unmask 
    $\rvx_{t_j}^i$ with a clean token from 
    $f_\theta^i(\cdot\,|\, \rvx_{t_j})\in\Delta(\mathcal{V})$.
\end{itemize}
\vspace{-0.05in}
The flexibility of choosing $\mathcal{S}\subseteq\{i\,|\,\rvx_{t_j}^i=\mask\}$ lies at the core of MDM inference; selecting which positions to unmask next can dramatically influence downstream performance both in terms of accuracy and efficiency. A large body of prior work \citep{chang2022maskgit,zheng2023reparameterized,kim2025train,nie2025large,peng2025pathplanningmaskeddiffusion,jazbec2025learning,hong2025improving,ma2025reinforced,lee2025lookahead,hayakawa2025demystifying,mo2025decoding} investigates this question. 

Formally, an unmasking policy $g_\phi$ -- either heuristic or learned -- maps a partially masked sequence and time
$(\rvx_t \in (\mathcal{V}\cup\{\mask\})^L,\, t)$ to a (possibly stochastic) subset $\mathcal{S}\subseteq\{i:\rvx_t^i=\mask\}$ of the masked positions, which is used at each inference step to select a subset to unmask. A widely used instantiation scores each masked position and then sets $\mathcal{S}$ as the top-K positions:
$g_\phi(\rvx_t,t)
=\mathrm{TopK}_{i:\,\rvx_t^i=\mask}\big[\mathrm{score}(i)\big]$,
where $\mathrm{score}(i)$ is a confidence score for position $i$. Common choices for computing the score include the maximum predicted probability
$\max_{v\in\mathcal{V}} f_\theta(v\,|\,\rvx_t,t)$ (``Top-K''), the margin between the top two probabilities
$f_\theta(v_1\,|\,\rvx_t,t)-f_\theta(v_2\,|\,\rvx_t,t)$ where $v_1,v_2$ are the top-2 tokens with the highest assigned probability (``Top-K margin''),
the negative entropy of the categorical distribution (``entropy-based''), or a position-dependent schedule (e.g., semi-autoregressive inference).

\subsection{Prior work on accelerating MDM training}
Given the any-order nature of the prediction task, MDM training is inherently more challenging and less compute-efficient than autoregressive training~\cite{nie2024scaling,kim2025train}, where the model only needs to learn causal next-token prediction. Consequently, prior work has relied on architectural or training-recipe-level strategies applied on top of the core MDM procedure. At scale, a common practice is to initialize MDMs from pretrained autoregressive models~\cite{dream2025,song2025seed,bie2025llada20scalingdiffusionlanguage,fan2026stablediffcoderpushingfrontiercode}, 
along with mixture-of-experts variants~\cite{ni2025openmoe2}. Block diffusion~\cite{arriola2025block,arriola2025encoder} combines block-wise autoregressive generation with MDM-style inference within each block, yielding speedups but essentially \emph{trading off} full any-order flexibility: at each step, the set of unmasked positions must lie within a block. 
\textbf{Notably, these approaches typically leave the underlying masking process unchanged.} In contrast, PUMA operates \textbf{at the level of the forward masking process}; it is thus largely orthogonal to these designs and can be incorporated alongside them, potentially compounding speedups.

PAPL~\citep{peng2025planner}, a recently proposed MDM training framework, also studies the train--test mismatch of MDMs.
While the authors derive an unmasking-policy-aware ELBO that is closely related to PUMA's setup, their method retains the standard random masking process at training and instead only applies loss reweighting across masked positions, leaving the distributional train--test mismatch unchanged. 
We defer a more detailed comparison between PAPL and PUMA to Section~\ref{sec:discussion}.\footnote{The authors realized the theoretical connection with~\cite{peng2025planner} after the release of the first version. We therefore include a thorough comparison in the second version of our manuscript.}

%% file: 3_PUMA.tex
\section{PUMA: Progressive UnMAsking}
In this section, we introduce \textbf{\xblue{P}rogressive \xblue{U}n\xblue{MA}sking (\xblue{PUMA})}. In Section~\ref{sec:puma_theory}, we establish the theoretical foundation of PUMA and present how it resolves the MDM's train-test mismatch stated in the introduction. We then describe our empirical design choices for PUMA in Section~\ref{sec:puma_practical}. In Section~\ref{sec:sample_complexity}, we instantiate examples of distributions for which aligning training-time masking with inference-time structure provably improves the statistical efficiency.

\subsection{Theoretical foundation of PUMA} \label{sec:puma_theory}
We begin by formalizing the train--test mismatch in MDMs.

\textbf{Train-test mismatch of MDM.} 
Recall that MDM inference proceeds by following an unmasking policy $g_\phi$ that looks at the \emph{currently revealed} tokens and decides which masked positions to unmask next. As a consequence, inference does not visit masked sequences uniformly (as done during training); instead, it induces a policy-dependent distribution over intermediate masked sequences. For example, under a \emph{semi-autoregressive} unmasking policy~\cite{nie2025large}, inference proceeds block-wise in an autoregressive manner; hence, the maskings observed at inference are restricted to block-wise causal patterns. 

Meanwhile, training allocates compute uniformly across all masking patterns, since each position is masked i.i.d. This suggests that training could be more compute-efficient if we \emph{concentrate} the training distribution on masking patterns that arise at inference. This raises the question:
\begin{center}
   \coloredhl{xxpurple}{\emph{\textbf{Is there a masking procedure that resolves this mismatch?}}}
\end{center}

\textbf{Challenge.} The above question is subtle for two reasons. \underline{\textbf{(1)}} At training time, we start from a clean sequence $\rvx_0$ and construct a masked example $\rvx_t$; at inference, however, intermediate states $\rvx_t$ are generated by \emph{progressively} applying the unmasking policy to partially masked sequences, without access to any clean sequence. \underline{\textbf{(2)}} We cannot arbitrarily change the forward process: modifying the joint distribution over $(\rvx_0,t,\rvx_t)$ can inadvertently change what MDM training aims to learn. Concretely, unless done carefully, changing $\xblue{(\rvx_0,t,\rvx_t)}$ can shift the  minimizer of the loss
\begin{equation} \label{eq:mdm_training_loss}
    \mathcal{L}(\theta) =  \mathbb{E}_{\xblue{(\rvx_0,t,\rvx_t)}}\biggl[\frac{1}{t}\sum_{i\colon \rvx_t^i=\mask} -\log f_\theta^i(\rvx_0^i \,|\,\rvx_t) \biggr]\,.
\end{equation}
\textbf{PUMA.} We now present PUMA and explain how it resolves these challenges. The 
\emph{crux} of PUMA is the so-called \emph{teacher-forced chain}, which we use to generate training-time samples.

At a high level, the teacher-forced chain is a progressive unmasking process that follows the \emph{same unmasking policy} $g_\phi$ used at inference time, but instead of sampling clean tokens from a posterior, it reveals the corresponding ground-truth tokens from the data sample $\rvx_0$. Surprisingly, this simple chain lets us align the distribution over masking patterns at training-time with the one at inference time.

\begin{figure}[t]
    \centering
\includegraphics[width=\linewidth, trim=0.7cm 0cm 0.7cm 0.cm]{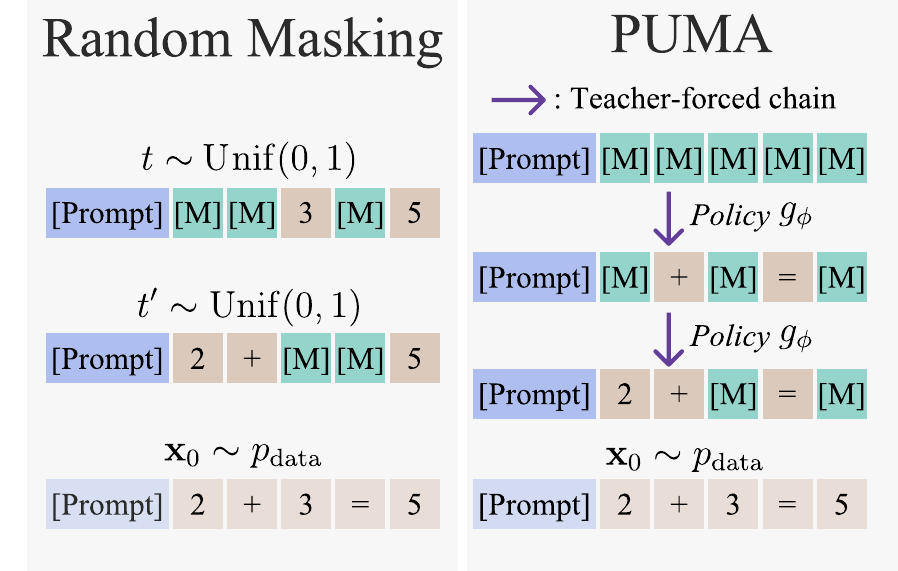}
    \caption{\textbf{Illustration of PUMA.} Under PUMA, training examples are generated via the teacher-forced chain using a given $\rvx_0$ (\coloredul{xxpurple}{purple}), the current model's policy $g_\phi$, and clean tokens $\rvx_0$. In contrast, standard MDM training yields (independently drawn) training samples by random masking.}
    \label{fig:puma_illus}
    \vspace{-0.20in}
\end{figure}

\textbf{Teacher-forced chain.} Fix a clean sequence $\rvx_0$, an unmasking policy $g_\phi$, and an integer $l \ge 1$. Define a time grid $t_j:=1-j/l$ for $j=0,1,\dots,l$, so that $t_0=1$ and $t_l=0$. We initialize the chain at the fully masked state $\rvx_{t_0}=\rvx_1=(\mask,\dots,\mask)$. To clarify, in contrast to inference (Section~\ref{sec:prelim}), we do not draw a clean token from $f_\theta$; whenever positions are selected, we reveal ground-truth tokens from $\rvx_0$. Concretely, given a partially masked state $\rvx_{t_j}$, obtain $\rvx_{t_{j+1}}$ by:
\vspace{-0.1in}
\begin{itemize}[leftmargin=1.4em]
    \item[(a)] Choose a subset of masked positions $\mathcal{S} = g_\phi(\rvx_{t_j},t_j)$.
    \vspace{-0.1in}
    \item[(b)] For each $i \in \mathcal{S}$, unmask 
    \coloredhl{xxpurple}{$\rvx_{t_j}^i=\mask$ into clean token $\rvx_0^i$}.
\end{itemize}
\vspace{-0.1in}
The chain above is defined \emph{conditional} on a particular $\rvx_0$, i.e., it specifies a conditional distribution over trajectories $(\rvx_{t_0},\ldots,\rvx_{t_l})$ given $\rvx_0$. At training time, we first sample $\rvx_0\sim p_{\mathrm{data}}$ and then run the chain (conditioned on $\rvx_0$).

Therefore, each chain run yields multiple intermediate masked sequences, each of which serves as a training example paired with the ground-truth sequence $\rvx_0$. In contrast, standard MDM training generates a single training example with random masking. We illustrate this in Figure~\ref{fig:puma_illus}. This teacher-forced chain can be efficiently implemented in practice without any computational overhead, and we will revisit it in Section~\ref{sec:puma_practical}.

We now informally state the \textbf{theoretical guarantee of PUMA}: \coloredhl{xxpurple}{\emph{\textbf{marginal agreement}}}  and \coloredhl{xxpurple}{\textbf{\emph{minimizer preservation}}}.

\begin{tcolorbox}[
  colback=gray!10,
  colframe=black,
  boxrule=0.5pt,
  arc=4pt,
  left=4pt, right=4pt,
  top=1.3pt, bottom=2.0pt,
  boxsep=2.0pt
] \label{puma_guarantee}
\textbf{(PUMA in a nutshell).}
PUMA training, i.e., sampling $\rvx_0\sim p_{\mathrm{data}}$ and generating contexts $\rvx_{t_j}$ via the teacher-forced chain, \textbf{(i)} matches the marginal distributions induced by MDM inference under the same policy $g_\phi$ and \textbf{(ii)} preserves the unmasking-posterior minimizer of the MDM objective.
\end{tcolorbox}

Together, these two properties resolve the challenges identified earlier: PUMA aligns training with the inference-time distribution induced by an unmasking policy, while ensuring that modifying the forward process does not change what the model is trained to learn. \coloredhl{xxpurple}{\emph{\textbf{As a result, PUMA enables more compute-efficient MDM training without affecting the minimizer of the underlying objective.}}} 

\textbf{Current model as an unmasking policy.}
An important practical consideration in operating the teacher-forced chain during pretraining is that the final unmasking policy, namely the one used at inference, is not available a priori: it is induced by the final pretrained network (see Section~\ref{sec:prelim} for the policy instances). Accordingly, we construct the teacher-forced chain using the unmasking policy defined by the \emph{current model} at that point in the training process.

Intuitively, as training proceeds and the model's predictions get better, the induced policy $g_\phi$ rapidly approaches the inference-time policy of the final model. In particular, the \emph{ranking} of which positions to unmask might tend to stabilize earlier than full posterior calibration, so the policy becomes \emph{approximately final} relatively early in training. We indeed validate this empirically in Section~\ref{sec:sudoku}, and return to implementation details in Section~\ref{sec:puma_practical}.

\subsubsection{PUMA's marginal agreement property}
In this section, we formalize the marginal agreement property of PUMA. The proof is deferred to Appendix~\ref{apx:proofs}.

\textbf{Intuition.} 
The unmasking posterior $p(\rvx_0^i=\cdot\mid \rvz)$ used in the MDM inference step (b) is the data conditional distribution given $\rvz$. Therefore, ``revealing a token from the \emph{true} posterior'' (\emph{idealized} MDM inference) is distributionally equivalent to ``first draw $\rvx_0\sim p_{\mathrm{data}}$, then reveal its tokens, and later marginalize over $\rvx_0\sim p_{\mathrm{data}}$'' (PUMA): both generate the same joint law over revealed tokens and remaining masks. Since $g_\phi$ only reacts to what is currently revealed (and time), both procedures induce the same distribution over intermediate masked contexts at each $t$.

\begin{proposition}[Informal] \label{prop:puma}
Fix a policy $g_\phi$ and consider an \emph{idealized} MDM inference procedure that, at step (b), samples clean tokens from the ground-truth unmasking posterior $p(\rvx_0^i=\cdot\,|\,\rvz)$. Let $q_{t}$ denote the distribution of $\rvx_{t}$ under this idealized inference. Then, for every $t$, the marginal distribution of $\rvx_{t}$ induced by the teacher-forced chain with $\rvx_0\sim p_{\mathrm{data}}$ coincides with $q_{t}$.
\end{proposition}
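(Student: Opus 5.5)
We prove the stronger statement that the \emph{joint} law of the trajectory $(\rvx_{t_0},\dots,\rvx_{t_l})$ under the teacher-forced chain, after marginalizing $\rvx_0\sim p_{\mathrm{data}}$, equals its law under idealized inference. The proposition then follows by taking the $t_j$-marginal. The argument is an induction on $j$ carrying a strengthened hypothesis, which we call \emph{posterior consistency}:

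\textbf{(H$_j$)}: the law of $\rvx_{t_j}$ under PUMA equals $q_{t_j}$, and conditional on $\rvx_{t_j}=\rvz$, the law of $\rvx_0$ under PUMA is $p_{\mathrm{data}}(\cdot\mid \rvz)$. Here $p_{\mathrm{data}}(\cdot\mid\rvz)$ is the law of $\rvx_0$ conditioned on its unmasked coordinates agreeing with $\rvz$. By time-agnosticity, its coordinate marginals are exactly the unmasking posteriors $p(\rvx_0^i=\cdot\mid\rvz)$.

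The base case $j=0$ is trivial: both chains start at $(\mask,\dots,\mask)$, and conditioning on this state carries no information, so $\rvx_0\sim p_{\mathrm{data}}$.

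For the inductive step, fix $\rvz$ with positive mass. The policy draws $\mathcal{S}\sim g_\phi(\rvz,t_j)$ using only $(\rvz,t_j)$ and its own randomness, independently of $\rvx_0$ given $\rvz$. Both procedures therefore select $\mathcal{S}$ with the same law. Under PUMA, given $(\rvz,\mathcal{S})$, the revealed tokens are $\rvx_0^{\mathcal{S}}$ with $\rvx_0\sim p_{\mathrm{data}}(\cdot\mid\rvz)$, so they follow the joint conditional law of $\rvx_0^{\mathcal{S}}$ given $\rvz$. This gives the next state's law, and hence $q_{t_{j+1}}$ once matched below. The posterior part of (H$_{j+1}$) is then Bayes' rule. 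Conditioning $p_{\mathrm{data}}(\cdot\mid\rvz)$ on $\rvx_0^{\mathcal{S}}$ yields $p_{\mathrm{data}}(\cdot\mid\rvz')$, where $\rvz'$ is the new state. This holds because $\mathcal{S}$ is conditionally independent of $\rvx_0$, and because $\rvz$ and $\mathcal{S}$ are recoverable from the path, so it remains to average over paths reaching $\rvz'$. The posterior is the same for every such path, so it survives that marginalization.

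\textbf{Main obstacle: what ``idealized inference'' samples at step (b).} For $|\mathcal{S}|>1$, PUMA reveals tokens from their \emph{joint} posterior, whereas literal MDM inference samples each $i\in\mathcal{S}$ independently from its marginal $p(\rvx_0^i=\cdot\mid\rvz)$. These laws differ in general. I would handle this in one of two ways:
\begin{enumerate}[leftmargin=1.4em]
\item Interpret ``idealized'' as drawing $\rvx_0^{\mathcal{S}}$ from the true joint conditional $p(\rvx_0^{\mathcal{S}}\mid\rvz)$. Equivalently, one can use sequential one-at-a-time reveals, which recover the joint by the chain rule.
\item Restrict to policies that unmask a single position per step, where marginal and joint coincide.
\end{enumerate}
With either convention, the same-law matching step is immediate and the induction closes. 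I would state this explicitly as the formal version's assumption. I would also note that the same argument works with a stochastic $g_\phi$, provided its randomness is independent of $\rvx_0$ given the current state. For a learned policy, this is automatic because the policy depends only on the network's output on $\rvz$.
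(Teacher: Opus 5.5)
Your argument is correct and is essentially the paper's proof. Both show that, conditional on the current teacher-forced state $\rvz$, the law of $\rvx_0$ is the data conditional given the revealed entries of $\rvz$, so the one-step kernels of the two chains coincide and induction from the common fully masked start gives $q_{t_j}=\tilde q_{t_j}$. The paper obtains that posterior in one shot from the factorization $\mathbb{P}(\tilde\rvz_{t_j}=\rvz\mid\rvx_0=\rvx)=\alpha_j(\rvz)\,\mathbf{1}\{\rvx_{\mathrm{um}(\rvz)}=\rvz_{\mathrm{um}(\rvz)}\}$, the same non-leaking form reused for the minimizer guarantee, whereas you propagate it by a Bayes update inside the induction.

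Your $|\mathcal{S}|>1$ caveat is exactly why the paper's formal version assumes one index per step, and your joint-conditional option is a valid generalization. Note also that (H$_j$) conditions only on the current state. That suffices for the marginal statement, but it would have to condition on the whole history to give the joint-trajectory claim you announce at the start; your Bayes step goes through unchanged in that form.
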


\textbf{Remark.} The precise way to present Proposition~\ref{prop:puma} is under the \emph{idealized} policy, where we only choose one position to unmask at each time step. This is because the token sampling step at inference is always done with $f_\theta$'s per-position posterior; the distributional dependency across positions is ignored. A more nuanced way to understand the teacher-forced chain is through continuous-time Markov chains~\cite{campbell2022continuous,gat2024discrete,kim2025any}, under which the marginal agreement holds for all $t\in[0,1]$. We formalize this argument in Appendix~\ref{appendix:prop1}.

\begin{algorithm*}[t]
\caption{PUMA pseudocode (one training iteration)} \label{code_puma}
\begin{algorithmic}[1]
\STATE \emph{Require:} MDM \xblue{$f_\theta$}, per-stage unmask number $K \in \mathbb{N}$, sequence length $L$, masked sequences $\{\rvz(j)\}_{j=1}^{B}$, paired clean samples \xorange{$\{\rvx_0(j)\}_{j=1}^{B}$}, and per-example stage counters $\{n(j)\}_{j=1}^{B}\in[0,L/K]$
\STATE \textbf{Forward pass:} compute \xblue{logits} $\{f_\theta(\cdot\mid \rvz(j))\}_{j=1}^{B}$ \RComment{\xslategray{$B\times L \times \Delta(\mathcal{V})$}}
\STATE Compute loss $\mathcal{L}(\theta)$ from \xblue{logits} and $\xorange{\rvx_0(j)}$; update $\theta$ with the optimizer \RComment{\xslategray{training update}}
\STATE \xslategray{\texttt{\# PUMA streaming update (per example $j$), parallelized}}
    \IF{$n(j)=L/K$}
        \STATE Sample \xorange{$\rvx_0(j)\sim p_{\mathrm{data}}$}; set $n(j)\gets 0$; initialize $\rvz(j)$  \RComment{\xslategray{refill a new sequence, restart the chain}}
    \ELSE
        \STATE Compute scores $\xxgreen{s_i} \gets \xblue{\max_{v\in\mathcal{V}} f_\theta^i(v\mid \rvz(j))}$ for masked indices $i$  \RComment{\xslategray{instantiate unmasking policy from the model}}
        \STATE Select reveal set $\mathcal{S}(j)$ using \xxgreen{$\{s_i\}$} \RComment{\xslategray{top-K + thresholding}}
        \STATE For all $i \in \mathcal{S}(j)$, update $\rvz(j)^i \gets \xorange{\rvx_0(j)^i}$, update $n(j)$ \RComment{\xslategray{teacher-forced chain update}}
        \ENDIF
\end{algorithmic}
\end{algorithm*}

\subsubsection{PUMA's minimizer guarantee}
In this section, we demonstrate that PUMA does \emph{not} alter the minimizer of the training loss despite intervening in the distribution $(\rvx_0,t,\rvx_t)$.

\textbf{Intuition.} We first investigate how the design of the forward process can influence the minimizer of the training loss. Consider the i.i.d.\ forward process, whose conditional distribution takes the following form:
\begin{align*}
  & p(\rvx_t = \rvz\,|\, \rvx_0,t)\\
  &\quad= \coloredul{xxgreen}{(1-t)^{|\mathbf{um}(\rvz)|}t^{L-|\mathbf{um}(\rvz)|}}\cdot\mathbf{1}\{ \rvx_0^{\mathbf{um}(\rvz)} = \rvz^{\mathbf{um}(\rvz)} \}, 
\end{align*}
where $\mathbf{um}(\rvz):=\{i\,|\,\rvz^i\ne\mask\}$ is the set of unmasked tokens $\rvz$ and $\mathbf{1}\{ \rvx_0^{\mathbf{um}(\rvz)} = \rvz^{\mathbf{um}(\rvz)} \}$ is the indicator function that identifies whether $\rvz$ matches $\rvx_0$ on $\mathbf{um}(\rvz)$. Now we observe how the first term (\coloredul{xxgreen}{green factor}) cancels out in the Bayes-optimal predictor, which is also the cross-entropy training loss's minimizer. By Bayes' rule,
\vspace{-0.02in}
{\small
\begin{align*}
    & \coloredul{xblue}{p(\rvx_0 \mid \rvx_t=\rvz,t)}\propto p(\rvx_t=\rvz \mid \rvx_0,t)\,p_{\mathrm{data}}(\rvx_0) \\
&=\coloredul{xxgreen}{(1-t)^{|\mathbf{um}(\rvz)|}\,t^{L-|\mathbf{um}(\rvz)|}}
    \cdot
    \mathbf{1}\big\{\rvx_0^{\mathbf{um}(\rvz)}=\rvz^{\mathbf{um}(\rvz)}\big\}
    \,p_{\mathrm{data}}(\rvx_0)\,,
\end{align*}}
where the coordinate-wise conditional distribution (\coloredul{xblue}{blue term}) corresponds exactly to the minimizer of the training loss. Crucially, the first term of the right-hand side (\coloredul{xxgreen}{green factor}) does not depend on $\rvx_0$ and therefore cancels out during normalization, leaving the posterior unchanged. The same argument applies more generally as long as the forward process follows the form
\vspace{-0.05in}
\begin{equation} \label{eq:non-leak}
p(\rvx_t = \rvz \mid \rvx_0,t) \propto \coloredul{xxgreen}{\alpha_t(\rvz)}\cdot\mathbf{1}\{ \rvx_0^{\mathbf{um}(\rvz)} = \rvz^{\mathbf{um}(\rvz)} \},
\end{equation}
for an arbitrary nonnegative function $\alpha_t(\rvz)$.

\begin{proposition}[Informal] \label{prop:info_leak} Under a forward process of the form of equation \eqref{eq:non-leak}, the training loss \eqref{eq:mdm_training_loss}'s unique minimizer does not change from the ground-truth unmasking posterior.
\end{proposition}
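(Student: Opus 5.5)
The proof reduces the cross-entropy objective \eqref{eq:mdm_training_loss} to a collection of independent per-context problems, then applies Bayes' rule under \eqref{eq:non-leak}. Throughout we assume $t>0$ on the support and that every masked context $\rvz$ of interest has positive probability under the forward process, i.e.\ $\alpha_t(\rvz)>0$ for some $t$. On contexts outside the support the minimizer is not determined, so uniqueness is understood on the support.

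\textbf{Step 1: reduce to per-context problems.} Using the tower rule, condition on $(t,\rvx_t=\rvz)$ and swap the sum over masked indices with the inner expectation. This gives
\begin{equation*}
\mathcal{L}(\theta)=\mathbb{E}_{(t,\rvz)}\Bigl[\tfrac{1}{t}\sum_{i\colon \rvz^i=\mask}\mathbb{E}_{\rvx_0\sim p(\cdot\mid \rvz,t)}\bigl[-\log f_\theta^i(\rvx_0^i\mid \rvz)\bigr]\Bigr].
\end{equation*}
For each fixed masked position $i$, the inner term is the cross-entropy between the posterior marginal $p(\rvx_0^i=\cdot\mid\rvz,t)$ and $f_\theta^i(\cdot\mid\rvz)$. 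By Gibbs' inequality it is minimized uniquely at $f^i(\cdot\mid\rvz)=p(\rvx_0^i=\cdot\mid\rvz,t)$.

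There is one subtlety. The network $f_\theta$ sees only $\rvz$ and not $t$, so the loss mixes different $t$ for the same $\rvz$. If the posterior depended on $t$, the minimizer would be a $1/t$- and $p(t\mid\rvz)$-weighted average of posteriors. This is exactly why time-agnosticism is needed: once the posterior is independent of $t$, the weighted average collapses to that single posterior, and uniqueness follows from strict convexity of cross-entropy in its second argument.

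\textbf{Step 2: Bayes' rule under \eqref{eq:non-leak}.} Write
\begin{equation*}
p(\rvx_0\mid\rvz,t)\propto \alpha_t(\rvz)\,\mathbf{1}\{\rvx_0^{\mathbf{um}(\rvz)}=\rvz^{\mathbf{um}(\rvz)}\}\,p_{\mathrm{data}}(\rvx_0).
\end{equation*}
The proportionality constant in \eqref{eq:non-leak} is the normalizer over $\rvz$. This is the delicate point: if it depended on $\rvx_0$, information would leak. So I read \eqref{eq:non-leak} as an exact equality up to a factor depending only on $(\rvz,t)$, and would state this assumption explicitly.

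Because $\alpha_t(\rvz)>0$ does not depend on $\rvx_0$, it cancels in normalization. Hence
\begin{equation*}
p(\rvx_0\mid\rvz,t)=p_{\mathrm{data}}(\rvx_0\mid \rvx_0^{\mathbf{um}(\rvz)}=\rvz^{\mathbf{um}(\rvz)}),
\end{equation*}
which is independent of $t$ and of $\alpha$. Taking the $i$-th marginal gives the ground-truth unmasking posterior $p(\rvx_0^i=\cdot\mid\rvz)$, the same one as under i.i.d.\ masking.

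\textbf{Step 3: conclude.} Combining Steps 1 and 2, the unique minimizer of \eqref{eq:mdm_training_loss} is the ground-truth unmasking posterior on every context in the support.

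\textbf{Main obstacle.} The hard part is the formal handling in Step 1 together with the proportionality convention in Step 2. To keep this clean, I would first prove time-independence of the posterior, and only then minimize pointwise in $\rvz$.
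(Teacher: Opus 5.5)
Your proposal is correct and follows essentially the same route as the paper. Bayes' rule cancels $\alpha_t(\rvz)$, which gives a time-agnostic data-conditional posterior; the loss then reduces, per masked coordinate, to a cross-entropy with nonnegative weight $\mathbb{E}[1/t\mid\rvz]$, minimized by that posterior (the paper packages this as its weighted cross-entropy lemma with $w_i(X)=\mathbb{E}_t[\tfrac1t\mathbf{1}\{X^i=\mask\}\mid X]$), and your explicit requirement that the normalizer in \eqref{eq:non-leak} not depend on $\rvx_0$ makes precise an assumption the paper uses only implicitly when it cancels the proportionality constant in Bayes' rule.
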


Importantly, the teacher-forced chain has the form of equation~\eqref{eq:non-leak}: the selection of $\mathcal{S}$ depends only on the currently observed tokens in $\rvx_{t_j}$ (and time), as the unmasking policy $g_\phi(\rvx_{t_j},t_j)$ does so, and never on the hidden entries of $\rvx_0$. Consequently, by Proposition~\ref{prop:info_leak}, replacing the vanilla forward process with the teacher-forced chain changes only the \emph{training distribution} while preserving the same unmasking-posterior minimizer of the MDM loss.

\subsection{Empirical instantiation of PUMA} \label{sec:puma_practical}
In this section, we describe how we instantiate PUMA in practice and provide pseudocode (Algorithm~\ref{code_puma}), focusing on the following implementation questions.

\begin{itemize}[leftmargin=0.6em]
\vspace{-0.1in}
\item How do we implement \coloredhl{xsienna}{batch streaming} for PUMA?
\item Does PUMA introduce any \coloredhl{xsienna}{computational overhead} compared to standard MDM training?
\vspace{-0.1in}
\end{itemize}

\textbf{Streaming teacher-forced chains in a minibatch.} We implement PUMA as a \coloredhl{xsienna}{streaming buffer} of $B$ teacher-forced chains. Concretely, the minibatch state consists of tuples $\{(\rvx_0(j),\,\rvz(j),\,n(j))\}_{j=1}^B$, where $\rvx_0(j)$ is the underlying clean sample, $\rvz(j)$ is the current masked context (a state on its chain), and $n(j)\in\{0,\dots,L/K\}$ is the current stage, where $L$ is the sequence length. At each training step, we compute the MDM loss on the current minibatch (one gradient step). We then advance \emph{each} sequence to the next state of its chain (by revealing additional tokens according to PUMA). When a sequence reaches the end of the chain, we refresh it by drawing a new clean example $\rvx_0$, re-initializing a new chain. Thus, every intermediate state along each chain becomes a training sample. For prompt-based tasks, we keep the prompt unmasked at all stages and apply PUMA only to the non-prompt region.

\textbf{Policy from the current model.}
As mentioned in Section~\ref{sec:puma_theory}, we instantiate the teacher-forced chain's $g_\phi$ on-the-fly using the current model itself. Specifically, given logits from \xblue{$f_\theta(\cdot\mid \rvz(j))$}, we compute a per-position confidence score for each masked index, e.g., the maximum predicted probability, rank masked positions by confidence, and unmask roughly $K$ tokens at each step (lines 8,9 in Algorithm~\ref{code_puma}). Here, we use the term "roughly" because introducing a small amount of randomness into the exact number of unmasked tokens, K, was found to be empirically beneficial. Please refer to Appendix~\ref{apx:training} for details.

\textbf{No additional forward-pass overhead.}
Importantly, PUMA \coloredhl{xsienna}{does not introduce any extra compute}: the \xblue{logits} required for the MDM loss on $\rvz(j)$ are exactly the quantities used to compute \xxgreen{confidence scores}.

\begin{figure*}[t]
    \centering
    \includegraphics[width=\textwidth, trim=-0.1cm 0.1cm 0.07cm 0.07cm]{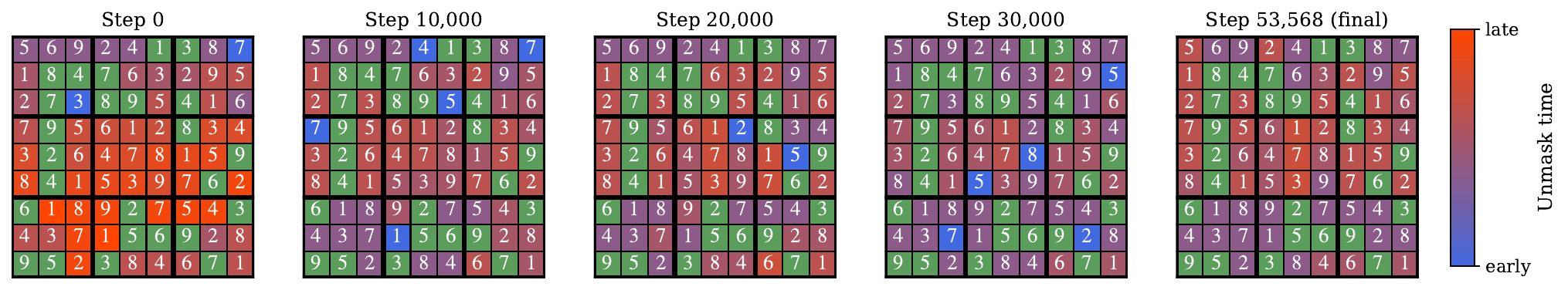}
    \caption{\textbf{PUMA finds unmasking trajectories close to the final model's trajectories (right) early on in training.} We show PUMA training unmasking orders at different training steps for a given Sudoku puzzle. Green cells indicate prompt tokens.}
    \label{fig:sudoku-order}
\end{figure*}

\subsubsection{Practical interventions}\label{sec:practical-interventions}
Typically, MDM inference reveals only a small number of mask tokens per step. Directly mimicking this behavior at training with a small $K$
is sample-inefficient: a fixed $\rvx_0$ would generate a long teacher-forced chain, often providing a redundant training signal. We therefore introduce two practical interventions that retain the guarantee of the teacher-forced chain while improving sample efficiency.

\textbf{Confidence-based fast-forwarding.}
We \emph{fast-forward} the chain if the next unmasking decisions are trivial for the current model. Concretely, at each step, we reveal roughly $K$ masked tokens selected by the policy but also reveal any masked position whose maximum predicted probability exceeds a confidence threshold, e.g., $0.9$. Intuitively, highly confident positions provide little training signal if kept masked, so revealing them early lets us follow the teacher-forced trajectory without stopping at every intermediate sequence.

\textbf{$K$-scheduling.}
Early in training, the model-based policy is noisy, so using small $K$ can waste compute on long, low-quality chains. We therefore use \emph{$K$-scheduling}: start with a large $K$ (yielding more frequent refreshes and greater diversity across $\rvx_0$) and gradually decrease $K$ as training progresses and the policy becomes more informative.

\subsection{Sample complexity of PUMA}
\label{sec:sample_complexity}
In this section, we provide a formal sample-complexity argument for a simple data distribution that shows that vanilla MDM training can suffer from exponential sample complexity, while PUMA under oracle trajectories avoids it. 

\textbf{Distribution.} Motivated by the Latents-and-Observation framework of~\citep{kim2025train}, we consider a parametric subclass $p_\theta$ with learnable parameter $\theta$. Specifically, $p_\theta$ is a distribution over sequences $X=\pi(U_1,\dots, U_d, Y)\in \mathcal{V}^L$, where $\mathcal{V}$ is a finite vocabulary, $\{U_i\}_{i=1}^{d}$ are latents, and $Y$ denotes observation. This distribution is defined through a fixed permutation $\pi\in S_L$ and parametrized by $\theta\in\Theta$ over a finite space $\Theta$. Writing $x=(u_1,\dots,u_d,y)$, assume that the law of $p_\theta$ admits the following factorization:
$p_\theta(\pi(x))=p_U(u_{1:d})O_\theta(y\mid u_{1:d})$, where $p_U$ is a fixed distribution over $\mathcal{V}^d$ that does not depend on $\theta$, and $O_\theta(\cdot \mid u_{1:d})$ is a $\theta$-parameterized conditional distribution over $\mathcal{V}$. We further impose the following assumptions:

\vspace{-0.10in}
\begin{itemize}[leftmargin=0.6em] 
\item \textit{Posterior collapse:} There exists a fixed distribution $\pi_0$ on $\mathcal{V}$ such that for all $\theta\in\Theta$ and all strict subsets $S\subsetneq[d]$, the posterior of $p_\theta$ at index $\pi^{(L)}$, i.e. the $Y$-index, conditioned on $S$ is identical to $\pi_0$; in other words, all latents need to be known in order to infer the observation. 
\vspace{-0.05in}
\item \textit{Identifiability:} Let $P_\theta$ denote the joint law of $\pi(U_{1:d},Y)$ under $p_\theta$. Assume there exists $\kappa>0$ such that for all distinct $\theta,\theta'\in\Theta$, the \emph{Chernoff information} between $P_\theta$ and $P_{\theta'}$ is at least $\kappa$; this condition ensures $\theta$ can be learned. \end{itemize}
\vspace{-0.05in}

In Appendix~\ref{apx:complexity}, we show that the above conditions are indeed easily fulfilled, e.g. by the additive group modulo $m$, alongside a certain choice of $p_\theta$.

Finally, we define the \emph{oracle trajectory} as the unmasking sequence that reveals indices according to the permutation $\pi$, namely $\pi^{(1)}, \pi^{(2)}, \ldots$ (note that the specific order is immaterial, so long as the observation index $\pi^{(L)}$ is revealed last). We treat one complete execution of this trajectory as contributing $L$ training samples. Under the assumptions above, we can establish a sharp separation in sample complexity between oracle-guided and random unmasking. The formal statement and proof are deferred to Appendix~\ref{apx:complexity}.
\begin{proposition}[Informal]\label{thm:complexity}
Choose an error threshold $\delta\in(0,1)$. Under the above assumptions, random masking requires sample complexity exponential in $d$ to drive error below $\delta$, while the MAP estimator with PUMA on oracle trajectories only requires samples linear in $d$.
\end{proposition}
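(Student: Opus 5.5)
We would formalize both procedures as estimators of $\theta$ from training samples. Under random masking, a sample is a pair $(\rvz,\rvx_0)$ with $\rvz$ obtained by i.i.d.\ masking. Under PUMA on the oracle trajectory, a sample is one of the $L$ prefixes $\pi^{(1)},\dots,\pi^{(j)}$ revealed in $\rvx_0$. The plan is to split the argument into a lower bound for random masking and an upper bound for the PUMA MAP estimator. Both rest on a single structural fact: by posterior collapse, a training example carries information about $\theta$ only if \emph{all} $d$ latents are unmasked. The information about $\theta$ sits entirely in the conditional law $O_\theta(y\mid u_{1:d})$, so we need the pair $(U_{1:d},Y)$ in full. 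When some latent index is masked, the loss term at the $Y$-index has Bayes target $\pi_0$, which is independent of $\theta$. When $Y$ is masked, that target is likewise independent of $\theta$. When $Y$ is visible but some $U_i$ is masked, the data law of the visible coordinates must also be checked not to depend on $\theta$; we will verify this from the factorization, or else restrict attention to the $Y$-prediction terms, which is how the informal statement should be read.

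\textbf{Lower bound for random masking.} Under i.i.d.\ masking at time $t\sim\mathrm{Unif}[0,1]$, a sample is \emph{informative} only if all $d$ latent positions are unmasked and position $\pi^{(L)}$ is masked as the target, or the full tuple is visible. The probability of this event is at most $\int_0^1 (1-t)^d\,dt = 1/(d+1)$. That bound is only polynomial, so the exponential claim needs more care. We expect the formal version to measure complexity in masked-token predictions or to fix a masking rate bounded away from $0$. With rate $t\ge t_0$, the informative probability is at most $(1-t_0)^d$, which is exponentially small. We would then condition on the number $N'$ of informative samples, which is Binomial$(n,(1-t_0)^d)$. All uninformative samples have a $\theta$-independent law, so by a Le Cam two-point argument the total variation distance between the induced sample laws for $\theta\neq\theta'$ is at most $\P(N'\ge1)\le n(1-t_0)^d$. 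Error below $\delta$ therefore forces $n\gtrsim (1-\delta)(1-t_0)^{-d}$. Pinning down the exact randomness model, in particular whether $t$ is drawn uniformly, is the step I expect to be the main obstacle. I would first try the uniform $t$ with the loss weighting $1/t$, and count only the $Y$-prediction sample.

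\textbf{Upper bound for PUMA.} Each complete oracle trajectory ends with one state in which all latents are revealed and the $Y$-index is the target. Counting $L$ samples per run, $n$ samples give $n/L$ full i.i.d.\ draws of $P_\theta$. The MAP estimator over the finite $\Theta$ is evaluated on these draws. By the standard Chernoff bound for multiple hypothesis testing, each pairwise error is at most $e^{-m\kappa}$ after $m$ draws. A union bound then gives error at most $|\Theta|e^{-m\kappa}$, so $m=\kappa^{-1}\log(|\Theta|/\delta)$ suffices. The total sample count is $n=L\,m = (d+1)\kappa^{-1}\log(|\Theta|/\delta)$, which is linear in $d$. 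Combining the two bounds yields the claimed separation.
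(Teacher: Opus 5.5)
Your proposal is correct and follows the paper's proof. The formal statement fixes a masking probability $q$, which is your anticipated resolution and a necessary one: under $t\sim\mathrm{Unif}[0,1]$, the event that all latents are revealed and $Y$ is masked has probability $\int_0^1 t(1-t)^d\,dt=\frac{1}{(d+1)(d+2)}$, so drop the plan of first trying uniform $t$. The paper then lower-bounds the error by $(1-1/|\Theta|)\bigl(1-q(1-q)^d\bigr)^n$, tacitly making your restriction to the $Y$-prediction term, and proves the upper bound exactly as you do, via the pairwise Chernoff lemma and a union bound over roughly $\kappa^{-1}\log(|\Theta|/\delta)$ trajectories of length $d+1$.

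Your Le Cam/TV lower bound is the same coupling idea. Two small corrections: it gives $n\gtrsim(1-2\delta)(1-q)^{-d}$ rather than $(1-\delta)(1-q)^{-d}$, and keeping the error bound $\tfrac{1}{2}\bigl(1-(1-q)^d\bigr)^n$ instead of linearizing via $\P(N'\ge 1)\le n(1-q)^d$ recovers the $\log(1/\delta)$ factor in the paper's bound.
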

\vspace{-0.05in}
Proposition~\ref{thm:complexity} formalizes PUMA's training efficiency advantage. When equipped with the oracle trajectory $\pi$ -- see Appendix~\ref{apx:complexity} for details on the connection between the oracle trajectories and PUMA's trajectories -- PUMA requires only linear training samples, as it generates training samples that are aligned with the underlying structure of the distribution: latent variables are revealed earlier. In contrast, vanilla MDMs exhibit exponential sample complexity.

\begin{figure*}[h]
    \centering
    \begin{subfigure}[t]{0.32\textwidth}
        \centering
        \includegraphics[width=1.1\linewidth, trim=-0.2cm 0.2cm 0cm 0.cm]{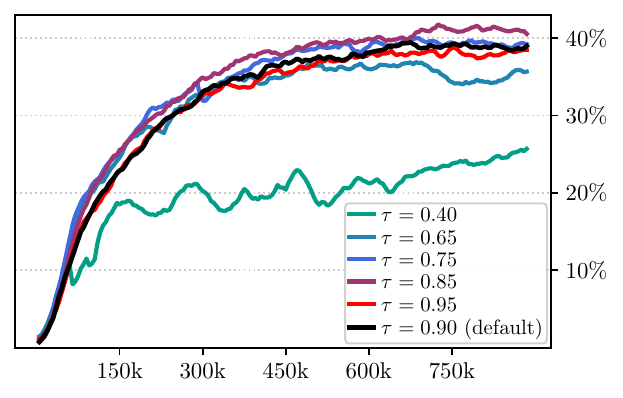}
    \end{subfigure}
    \hfill
    \begin{subfigure}[t]{0.32\textwidth}
        \centering
        \includegraphics[width=1.1\linewidth, trim=-0.2cm 0.2cm 0cm 0.cm]{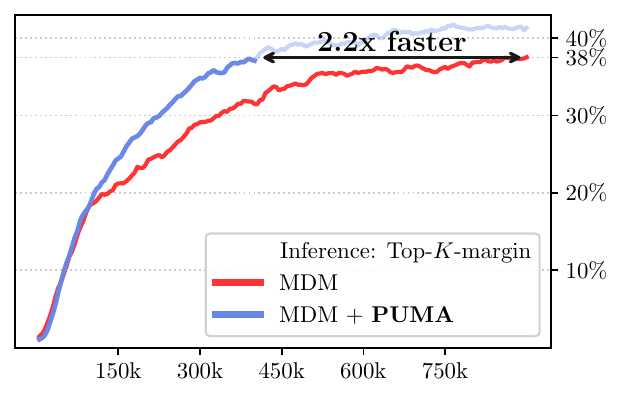}
    \end{subfigure}
    \hfill
    \begin{subfigure}[t]{0.32\textwidth}
        \centering
        \includegraphics[width=1.1\linewidth, trim=-0.2cm 0.2cm 0cm 0.cm]{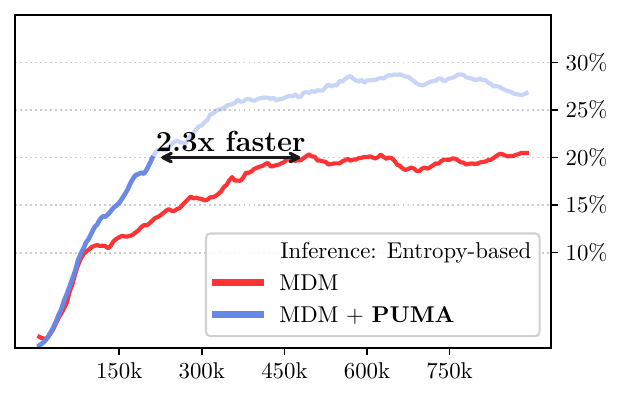}
    \end{subfigure}
    \caption{\textbf{Left}: PUMA’s efficiency is largely insensitive to the confidence threshold, except when the threshold is very small. \textbf{Middle, Right}: A single PUMA-trained model (trained under one policy) remains robust to inference-time policy choices, consistently outperforming the baseline across different unmasking policies. (Top-K margin, Entropy)}
    \label{fig:2}
\end{figure*}

%% file: 4_Experiment.tex
\section{Experiments}
In this section, we present experimental results for PUMA, focusing on two central claims:
\begin{itemize}[leftmargin=0.6em]
\vspace{-0.08in}
    \item \coloredhl{xsienna}{\textbf{Practicality}}: PUMA speeds up MDM pretraining. 
    \vspace{-0.03in}
    \item \coloredhl{xsienna}{\textbf{Compatibility}}: PUMA offers complementary speedups on top of autoregressive initialization and block diffusion.
    \vspace{-0.12in}
\end{itemize}
In Section~\ref{sec:sudoku}, we validate our intuition of PUMA's design on Sudoku puzzles.
In Section~\ref{sec:pretraining} and Section~\ref{sec:dream}, we demonstrate PUMA’s efficiency in 125M-scale MDM pretraining and in 7B-scale MDM post-training. Finally, in Section~\ref{sec:remedy}, we show that these speedups persist when PUMA is combined with other recipe-level remedies.

\subsection{Sudoku puzzle as a testbed}\label{sec:sudoku}
In this section, we use Sudoku puzzles as a simple testbed to probe the practical viability of PUMA (recall Section~\ref{sec:puma_theory}).

\textbf{Setup.} We use the Sudoku dataset from \citet{david_g__radcliffe_2020}, implemented by \citet{shah2024causal}, with 1.8M training and 0.1M test puzzles. We train a 6.8M bidirectional transformer with Qwen2-style attention~\cite{team2024qwen2}, 8 layers, and a hidden dimension of 256. For PUMA, we use a fixed $K$-schedule with $K=10$ and a confidence threshold of $0.9$. Training is done on 2 NVIDIA H100 GPUs with a batch size of 32 per GPU, taking approximately an hour.

\textbf{Policy stabilization during training.}
PUMA achieves $90.1\%$ final accuracy (with the Top-2 unmasking rule and $|\mathcal{S}|=2$), and speeds up training compared to the MDM baseline by $1.7\times$; Figure~\ref{fig:sudoku-speedup} in the appendix. Figure~\ref{fig:sudoku-order} visualizes PUMA's teacher-forced chain for a fixed instance at different checkpoints. At the early stage of training, the induced chain is uninformative. However, after a small number of iterations, the pattern of the fully trained network's unmasking policy starts to emerge. In addition, the model becomes more confident in its predictions as training proceeds, and PUMA's confidence thresholding results in shorter chains. Figure~\ref{fig:trajectories} shows that the distance of training trajectories (formally defined in Appendix~\ref{apx:training}) to the final model decreases quickly. This empirically validates that the \emph{ranking} over positions (which drives the unmasking policy) can stabilize earlier than full posterior calibration (which drives the task accuracy), so using the current model for teacher-forced chains is a reasonable proxy already early in training.

\subsection{PUMA accelerates pretraining}\label{sec:pretraining}
Moving beyond Sudoku puzzles, we test whether PUMA remains effective in a more realistic, larger-scale pretraining regime. In choosing an appropriate scale and data domain, we aimed for a setup where (1) from-scratch pretrained models can achieve reasonable performance on real downstream benchmarks, e.g., GSM8K~\cite{cobbe2021training}, while (2) the overall compute requirements remain feasible for running multiple controlled experiments.

\textbf{Dataset.} To satisfy both criteria, we adopt TinyGSM~\cite{liu2023tinygsm} as a pretraining corpus with $11.8$M samples. TinyGSM converts the solutions of GSM8K-style math problems written in natural language into short and structured Python programs, making the task substantially more learnable at small model scales. Indeed, \citet{liu2023tinygsm} report that even a 125M-parameter autoregressive model pretrained from scratch on TinyGSM can reach reasonable GSM8K test set performance. Motivated by this, we use TinyGSM to study PUMA pretraining under a 125M MDM, which enables multiple runs and ablations under our compute budget.

\textbf{Setup.} We conduct an \coloredhl{xsienna}{apples-to-apples comparison} by measuring the \emph{GSM8K test accuracy as a function of training iterations}. PUMA and the baseline differ \emph{only} in batch control, but all other factors, including architecture, optimizer, and evaluation, are kept fixed. We use a 125M MDM with Qwen2-style attention~\cite{team2024qwen2}, with hidden size 512, 14 layers, 8 heads, and maximum sequence length 512. 

As stated in Section~\ref{sec:puma_practical}, we include $K$-scheduling and confidence-thresholding for PUMA. In particular, we initialize $K=42$ and decrease it by 3 every 30k iterations until $K=12$, and set the confidence threshold as $0.9$. All runs train for 900k iterations, and evaluation is done on the standard GSM8K test set every 5k steps. We train on 8 NVIDIA H100 GPUs with a batch size of 32 per GPU, and training takes approximately three days. At inference, we unmask with the Top-2 scoring rule, using the maximum token-wise probability as the default confidence score. To improve robustness, we repeat each experiment three times with different random seeds. Furthermore, evaluation is done on EMA checkpoints as it often provides better accuracy for diffusion models~\citep{karras2023analyzing}.

\colorlet{xredmild}{xred!75!white}
\textbf{Results.}
Figure~\ref{fig:main} shows that PUMA accelerates (\coloredul{xblue}{blue}) MDM pretraining (\coloredul{xredmild}{red}) by up to $2.3\times$ in terms of iterations-to-accuracy. Noting that PUMA does not introduce additional forward-pass computation, this indicates that \coloredhl{xsienna}{PUMA is a practically efficient training strategy}.

We additionally measure wall-clock throughput: \emph{3.78 it/s} for MDM versus \emph{3.41 it/s} for PUMA. We attribute this small throughput gap to a loss function implementation detail. MDM uses the fused PyTorch primitive \texttt{F.cross\_entropy}, whereas our PUMA implementation explicitly computes token log-probabilities (needed for the teacher-forced-chain update) and then forms the cross-entropy loss, which may incur extra kernel traffic.

\textbf{Robustness across inference policies.}
PUMA’s theoretical guarantee is established under the assumption that training and inference deploy the same unmasking policy. In practice, however, models are evaluated under a variety of policies. In light of this, we evaluate each training checkpoint under multiple unmasking policies. Specifically, we consider numerous sampling configurations, varying both $|\mathcal{S}|$ and inference-time unmasking policy, e.g., Top-K, Top-K margin, entropy-based, entropy-based thresholding, and non-zero temperature sampling. As shown in Figure~\ref{fig:2} (Middle, Right) and Figure~\ref{fig:appendix_results} in the appendix, PUMA \coloredhl{xsienna}{consistently improves upon the baseline across all evaluation choices}. This indicates that the resulting speedup transfers robustly to other unmasking strategies.

\subsection{Ablation study}
We ablate two primary design choices of PUMA: \emph{$K$ scheduling} and \emph{inference sampling strategy}.
For $K$ scheduling, we compare our default run (with scheduling from $K=42$ to $K=12$) against a variant without $K$ scheduling and a fixed value of $K=12$. Furthermore, we provide an ablation across different $K$ schedules in Appendix~\ref{apx:k-schedule}.
Regarding the inference sampling strategy, we provide a sweep over the confidence threshold $\tau\in\{0.4, 0.65, 0.75, 0.85, 0.9 \text{ (default)}, 0.95\}$ in Figure~\ref{fig:2}, and a more extensive ablation over different inference strategies, including varying the number of decoded tokens per step and non-zero temperature sampling, in Appendix~\ref{apx:unmasking-policies}.

\textbf{Results.} Figure~\ref{fig:appendix_results} in the appendix shows that removing $K$ scheduling reduces the speed-up, indicating that the schedule is crucial for PUMA. In Figure~\ref{fig:k-schedule} in Appendix~\ref{apx:k-schedule}, we see that our default $K$ schedule outperforms other variants that decrease $K$ either more or less aggressively, suggesting that our $K$ schedule strikes a balance between the two.
For confidence thresholding, Figure~\ref{fig:2} suggests that PUMA's efficiency is broadly insensitive to the threshold, except when it is too low, such as 0.4 or 0.65. At such thresholds, PUMA unmasks numerous low-confidence positions, and this substantially changes the teacher-forced chain dynamics and degrades performance.

Table~\ref{tab:inference-policies} in Appendix~\ref{apx:unmasking-policies} shows that PUMA outperforms the MDM baseline uniformly across all inference policies and also tends to unmask around $10\%$ more tokens per step at inference under confidence-fast-forwarding inference policies. We attribute this to the fact that PUMA explicitly trains on inference-aligned trajectories, thus making it more familiar with the structured partially masked states encountered at inference.

\subsection{Combining PUMA with other remedies} \label{sec:remedy}
In this section, we show that PUMA can be combined with common training recipes for MDMs at scale, demonstrating that PUMA's benefit is \coloredhl{xsienna}{complementary to such remedies.}

\subsubsection{Autoregressive model initialization}
Initializing MDMs from pretrained autoregressive models is a well-known strategy for accelerating the training~\citep{he2022diffusionbertimprovinggenerativemasked,dream2025,bie2025llada20scalingdiffusionlanguage,gong2025diffucoder,gemini2025diffusion,fan2026stablediffcoderpushingfrontiercode}.
To this end, we first pretrain an autoregressive model with the same transformer architecture as in Section~\ref{sec:pretraining} for 200k steps, which takes approximately 9.5 hours on 8 NVIDIA H100 GPUs. We then initialize the MDM from this checkpoint, change to bidirectional attention, and train it with and without PUMA. We keep other configurations the same as in Section~\ref{sec:pretraining}. 
Figure~\ref{fig:main} reports the resulting learning curves (\coloredul{xxpurple}{purple} vs.\ \coloredul{xxgreen}{green}). While both methods benefit from ARM initialization, PUMA exhibits a substantially larger gain, achieving a speedup of approximately $4.0\times$ over the baseline.

\subsubsection{Block-size warmup}
Block diffusion~\citep{arriola2025block,arriola2025encoder} combines block-wise autoregressive generation with MDM-style inference within each block, with reported improved training efficiency over vanilla MDM training. Motivated by this, recent large-scale pretraining efforts~\citep{bie2025llada20scalingdiffusionlanguage,fan2026stablediffcoderpushingfrontiercode} adopt a \emph{block-size warmup}: MDM is initialized with a pretrained autoregressive model (block size equals the full sequence length) and gradually transitions toward an MDM. Empirically, this warmup improves training efficiency.

We train block diffusion models using a block size $256$, and aggregate the training loss across blocks for each sample. For PUMA, we generate a teacher-forced chain as in Section~\ref{sec:puma_practical}, but restricted to each block. As shown in Figure~\ref{fig:appendix_results}, PUMA also accelerates training in this setting, which shows that its benefits are complementary to block diffusion.

\subsection{Applying PUMA to 7B-scale MDM}\label{sec:dream}

In this section, we demonstrate that PUMA stays effective in post-training of pretrained MDMs, which has been shown to be crucial for downstream performance~\cite{dream2025,gong2025diffucoder,bie2025llada20scalingdiffusionlanguage}. 

\textbf{Setup.} We collect $700$k training pairs from opc-sft-stage-2-educational~\cite{Huang2024OpenCoderTO}, KodCode-V1-SFT-4o~\cite{xu2025kodcode}, and OpenCodeInstruct~\cite{ahmad2025opencodeinstruct}, while padding all sequences to a maximum length of $640$. We then fine-tune Dream-Coder-Base-7B~\cite{xie2025dream} with a frozen backbone and LoRA~\cite{hu2022lora} rank 128, resulting in approximately 300M trainable parameters. We compare vanilla training against PUMA, using a fixed $K$-schedule with $K=15$ and a confidence threshold of $0.9$. All runs are conducted on 8 NVIDIA H100 GPUs with a global batch size of $256$. Following the exact evaluation protocol of Dream-Coder's public codebase, we report Pass@1 accuracy on HumanEval~\cite{chen2021codex} and MBPP~\cite{austin2021program}, evaluated every 2{,}000 training steps.

\textbf{Results.} As shown in Figure~\ref{fig:dream_coder}, PUMA yields clear gains over vanilla MDM fine-tuning on both benchmarks. On HumanEval, PUMA improves performance by about 10 percentage points, whereas vanilla MDM tuning fluctuates and achieves at most a $3\%$ gain. On MBPP, vanilla tuning again exhibits larger fluctuations, while PUMA delivers a consistent improvement of
approximately 5 percentage points. This shows that PUMA is beneficial even in large-scale LoRA fine-tuning.

\textbf{Remark.} We hypothesize that the relatively modest gain of vanilla fine-tuning stems from the fact that LoRA restricts optimization to a smaller trainable subspace, making high-quality training signals especially important. In this regard, PUMA is particularly well-suited to this setup, as it focuses training on inference-aligned predictions.

%% file: 5_Conclusion.tex
\section{Discussion} \label{sec:discussion}

\textbf{Discussion on evidence lower bound.} The evidence lower bound (ELBO) is one clear way to derive the loss objective of MDMs~\cite{sahoo2024simple,shi2024simplified}, and having a reliable ELBO can be important in several practical setups, e.g., reinforcement learning. This raises a question of whether an ELBO can serve as a reliable bound for models trained with PUMA. We empirically compare the resulting ELBOs in our TinyGSM and Dream-Coder fine-tuning experiments and defer a more detailed discussion to Appendix~\ref{ref:app_elbo}.

\textbf{Empirical comparison to PAPL~\cite{peng2025planner}.}
We empirically compare PUMA with PAPL~\cite{peng2025planner} on Sudoku and TinyGSM. We sweep PAPL's reweighting hyperparameter $\alpha \in \{1,3,5\}$ for Sudoku, and use the default $\alpha=1$ for TinyGSM. PAPL uses the same random masking procedure as vanilla MDM training, but reweights each cross-entropy term in the training loss proportional to the model's confidence, scaled by $\alpha$.

As shown in Appendix~\ref{apx:papl}, PAPL does not help in training on either of our datasets, and often even hurts performance; this trend is consistent across the MDM baseline and PUMA, as well as different choices of $\alpha$.
This shows that when performance is sensitive to the inference policy, as in Sudoku and TinyGSM, directly changing the forward process can be substantially more effective than loss reweighting alone. That said, PAPL might still be preferable in domains that are less sensitive to generation order, or in setups where using only a single forward pass per training sample is important, since PUMA requires multiple forward passes.

\textbf{Connection to self-forcing~\cite{huang2025self}.} Although different in technical detail, our method is conceptually related to self-forcing~\cite{huang2025self}. In self-forcing, a diffusion model performs inference-style rollout during training to reduce the train--inference gap. At a high level, this is close to our teacher-forced chain, which also trains the model under the inference-time policy.

\section{Conclusion}

\textbf{Limitation.} Our empirical evaluation of PUMA, while already covering meaningful settings, is still limited to 125M-scale pretraining and 7B-scale post-training. Another limitation of PUMA is that it cannot be implemented efficiently in a single-epoch setup, since the teacher-forced chain naturally entails multiple forward passes for a single sequence. That said, our experimental results challenge the view that a single forward pass per sample is necessarily preferable even in such a regime; we consistently observe that during the first epoch, PUMA outperforms the baseline when compared at the same number of gradient steps.

\textbf{Broader Perspective.}
Accelerating diffusion model training has been extensively studied in continuous domains. While these approaches explore different objectives and training principles, they cannot operate beyond the \emph{standard forward-process} framework: Gaussian noise is added independently across dimensions. As a result, most existing degrees of freedom take the form of modifying a scalar noise schedule, reweighting the loss~\cite{nichol2021improved,salimans2022progressive}, or introducing an auxiliary regularizer~\cite{yu2024representation}.

Masked diffusions, in contrast, admit a far wider design space. They admit substantial flexibility in the \emph{decoding order}--\emph{which positions are revealed when}. Prior work~\cite{zheng2024masked,kim2025train} has shown that this flexibility can be exploited at inference time.
This flexibility is a defining feature of masked diffusions and lies squarely outside the scope of continuous diffusion formulations.

In this work, we show that this structural freedom can also be exploited at training time. PUMA leverages the discrete nature of masked diffusion by modifying the forward masking process to align the distributions over masked sequences induced at train- and inference-time, opening the door to a new design axis for masked diffusion models.

\section*{Acknowledgments}
SC was supported in part by NSF CAREER award CCF-2441635. DAM acknowledges support from the Kempner Institute, NSF award no.~2229881, and the Aramont Fellowship Fund. JG is supported by a Kempner Graduate Fellowship. SK acknowledges support from the Kempner Institute.

\section*{Impact Statement}

This paper presents work whose goal is to advance the field 
of Machine Learning. There are many potential societal consequences of our work, none of which we feel must be specifically highlighted here.

%% file: A_Proofs.tex
\section{Theory}\label{apx:proofs}
\subsection{Proof of Proposition~\ref{prop:puma}} \label{appendix:prop1}
In this section, we provide the complete proof of Proposition~\ref{prop:puma}.

\paragraph{Clarification of the policy notation.}
In the main text, we write $\mathcal{S} = g_\phi(\rvz,t)$ to denote the \emph{output} of the unmasking policy, which in practice can be stochastic. In this appendix, we adopt the equivalent distributional view under the simplifying assumption $|\mathcal{S}|=1$: namely, we write $g_\phi(i \mid \rvz,t)$ for the conditional probability of selecting the next index $i$ given the current state $(\rvz,t)$. Sampling $I \sim g_\phi(\cdot \mid \rvz,t)$ and setting $\mathcal{S}=\{I\}$ recovers the main-text notation.

\paragraph{Notation.} For $\rvz\in\mathcal{Z}:=(\mathcal{V} \cup \{\mask\})^L$, define the unmasked and masked index sets
$\mathrm{um}(\rvz) := \{ i \in [L] : \rvz_i \neq \mask \}$ and $\mathrm{msk}(\rvz) := [L]\setminus \mathrm{um}(\rvz)$.
For $i \in [L]$ and $v \in \mathcal{V}$, write $z^{(i \leftarrow v)} \in \mathcal{Z}$ for the sequence obtained by replacing the $i$-th entry of $z$ with $v$ (leaving all other coordinates unchanged).

Let $\rvx_0 \sim p_{\mathrm{data}}$ be a clean sequence in $\mathcal{V}^L$.
For any partially-masked $\rvz \in \mathcal{Z}$ and any $i \in \mathrm{msk}(\rvz)$, the (time-agnostic) ground-truth unmasking posterior is
$p(\rvx_0^i = v \mid \rvz) := \mathbb{P}(\rvx_0^i = v \mid \rvx_0^{\mathrm{um}(\rvz)} = \rvz^{\mathrm{um}(\rvz)})$.

We now restate the formal version of Proposition~\ref{prop:puma}.
\newcounter{savedthm}

\setcounter{savedthm}{\value{theorem}} % store current theorem counter
\setcounter{theorem}{0}

\begin{proposition}[Formal]
Fix an integer $l \ge 1$ and a time grid $t_j := 1 - j/l$ for $j=0,1,\dots,l$.
Consider the following two Markov chains on $\mathcal{Z}$, both initialized at the fully masked state $(m,\dots,m)$:
\begin{enumerate}
    \item Given $\rvz_{t_j}$, sample an index $I_j \sim g_\phi(\cdot \mid \rvz_{t_j}, t_j)$ (so $I_j \in \mathrm{msk}(\rvz_{t_j})$), then sample a token
$V_j \sim p(\rvx_0^{I_j} = \cdot \mid \rvz_{t_j})$, and set $\rvz_{t_{j+1}} := \rvz_{t_j}^{(I_j \leftarrow V_j)}$.
Let $q_{t_j}$ denote the law of $\rvz_{t_j}$ under this procedure.
\item First sample $\rvx_0 \sim p_{\mathrm{data}}$ and set $\tilde \rvz_{t_0}=(m,\dots,m)$.
Given $\tilde \rvz_{t_j}$, sample $I_j \sim g_\phi(\cdot \mid \tilde \rvz_{t_j}, t_j)$ and set
$\tilde \rvz_{t_{j+1}} := \tilde\rvz_{t_j}^{(I_j \leftarrow \rvx_0^{I_j})}$.
Let $\tilde q_{t_j}$ denote the marginal law of $\tilde \rvz_{t_j}$ after sampling $\rvx_0 \sim p_{\mathrm{data}}$.
\end{enumerate}
\coloredhl{xxpurple}{Then, for every $j \in \{0,1,\dots,l\}$, we have $q_{t_j} = \tilde q_{t_j}$.}
\end{proposition}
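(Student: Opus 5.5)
We prove the claim by induction on $j$, but with a strengthened hypothesis that tracks the joint law of the revealed state together with the hidden clean sequence, not just the marginal $\tilde q_{t_j}$. Concretely, we show that for every $j$ and every $\rvz\in\mathcal{Z}$:
\begin{equation*}
\textbf{(H$_j$)}\qquad \P(\tilde\rvz_{t_j}=\rvz,\ \rvx_0=\rvx) = \tilde q_{t_j}(\rvz)\, \P\bigl(\rvx_0=\rvx \,\big|\, \rvx_0^{\mathrm{um}(\rvz)}=\rvz^{\mathrm{um}(\rvz)}\bigr), \qquad \tilde q_{t_j}=q_{t_j}.
\end{equation*}
The first part says: conditionally on the teacher-forced state $\tilde\rvz_{t_j}=\rvz$, the clean sequence $\rvx_0$ is distributed as $p_{\mathrm{data}}$ conditioned only on agreeing with $\rvz$ on $\mathrm{um}(\rvz)$. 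In other words, the path of selected indices leaks nothing about the hidden coordinates beyond what is revealed. The base case $j=0$ holds trivially: $\tilde\rvz_{t_0}$ is the all-mask state, deterministic and independent of $\rvx_0$, and it equals $q_{t_0}$.

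For the inductive step, fix a target state $\rvz'$ reachable in one step. Then $\rvz'=\rvz^{(i\leftarrow v)}$ for a unique predecessor $\rvz$ and a unique index $i$, so $\rvz$ is determined by $\rvz'$ and $i$. Next, write the joint probability of $(\tilde\rvz_{t_{j+1}}=\rvz',\rvx_0=\rvx)$ as a sum over $(\rvz,i)$ of
\begin{equation*}
\P(\tilde\rvz_{t_j}=\rvz,\rvx_0=\rvx)\; g_\phi(i\mid\rvz,t_j)\;\mathbf{1}\{\rvx^i=v\}.
\end{equation*}
This factorization uses the fact that $I_j$ is drawn from $g_\phi(\cdot\mid\tilde\rvz_{t_j},t_j)$, which depends only on the current state and time, not on $\rvx_0$. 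This is the key Markov/no-leak property, exactly as in the form \eqref{eq:non-leak}.

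Now substitute (H$_j$). The term
\begin{equation*}
\P\bigl(\rvx_0=\rvx\mid \rvx_0^{\mathrm{um}(\rvz)}=\rvz^{\mathrm{um}(\rvz)}\bigr)\,\mathbf{1}\{\rvx^i=v\}
\end{equation*}
factorizes as
\begin{equation*}
p(\rvx_0^i=v\mid\rvz)\cdot \P\bigl(\rvx_0=\rvx\mid \rvx_0^{\mathrm{um}(\rvz')}=\rvz'^{\mathrm{um}(\rvz')}\bigr)
\end{equation*}
by the chain rule for conditioning. After this substitution the joint becomes
\begin{equation*}
\Bigl[\sum_{(\rvz,i)} q_{t_j}(\rvz)\,g_\phi(i\mid\rvz,t_j)\,p(\rvx_0^i=v\mid\rvz)\Bigr]\cdot \P\bigl(\rvx_0=\rvx\mid\rvx_0^{\mathrm{um}(\rvz')}=\rvz'^{\mathrm{um}(\rvz')}\bigr).
\end{equation*}
The bracket is exactly the one-step transition of chain 1 applied to $q_{t_j}$, so it equals $q_{t_{j+1}}(\rvz')$. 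Summing over $\rvx$ gives $\tilde q_{t_{j+1}}=q_{t_{j+1}}$, and the displayed product form is (H$_{j+1}$). The induction closes.

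The main obstacle is recognizing that the marginal statement alone is not inductive, so the conditional-law invariant in (H$_j$) must be carried along. The delicate point is the factorization step. It relies on $g_\phi$ being independent of $\rvx_0$ given the current state, and on the sum over predecessors being well defined. A minor technicality is the case of zero-probability conditioning events: whenever $\P(\rvx_0^{\mathrm{um}(\rvz)}=\rvz^{\mathrm{um}(\rvz)})=0$, both sides are zero, and I will handle this by restricting all sums to states in the support of $q_{t_j}$.
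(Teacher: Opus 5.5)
Your proof is correct and follows essentially the same route as the paper. Both arguments rest on the no-leak invariant: conditionally on $\tilde\rvz_{t_j}=\rvz$, the clean sequence $\rvx_0$ is distributed as $p_{\mathrm{data}}$ conditioned on $\rvx_0^{\mathrm{um}(\rvz)}=\rvz^{\mathrm{um}(\rvz)}$, so both chains have the one-step transition $g_\phi(i\mid\rvz,t_j)\,p(\rvx_0^i=v\mid\rvz)$. The only difference is that the paper gets this invariant directly from the likelihood form $\P(\tilde\rvz_{t_j}=\rvz\mid\rvx_0=\rvx)=\alpha_j(\rvz)\,\mathbf{1}\{\rvx^{\mathrm{um}(\rvz)}=\rvz^{\mathrm{um}(\rvz)}\}$ and Bayes' rule, whereas you propagate it forward by induction. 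One wording slip: $\rvz'$ does not have a unique predecessor, but one predecessor $(\rvz')^{(i\leftarrow\mask)}$ for each $i\in\mathrm{um}(\rvz')$, with $v=(\rvz')^i$; your sum over $(\rvz,i)$ already handles this correctly.
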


\begin{proof}
We show that the two chains have the same one-step transition kernel from every state $\rvz \in \mathcal{Z}$.

Fix a step $j$ and a state $\rvz \in \mathcal{Z}$.
Under the teacher-forced chain, the event $\{\tilde{\rvz}_{t_j}=\rvz\}$ is possible only if the sampled clean sequence $\rvx_0$ matches $\rvz$ on the revealed coordinates, i.e.,
$\rvx_0^{\mathrm{um}(\rvz)} = \rvz_{\mathrm{um}(\rvz)}$.
Moreover, conditioned on $\rvx_0^{\mathrm{um}(\rvz)} = \rvz_{\mathrm{um}(\rvz)}$, the probability of realizing $\tilde{\rvz}_{t_j}=\rvz$ depends only on the policy randomness along the trajectory (since $g_\phi$ reacts only to the currently revealed tokens and $t$), and is therefore identical for all $\rvx_0$ that agree with $\rvz$ on $\mathrm{um}(\rvz)$.
Consequently, for some scalar $\alpha_j(\rvz)\ge 0$ and all $\rvx \in V^L$,
\begin{equation*}
\mathbb{P}(\tilde{\rvz}_{t_j}=\rvz \mid \rvx_0=\rvx)
=
\alpha_j(\rvz)\,\mathbf{1}\{\rvx_{\mathrm{um}(\rvz)} = \rvz_{\mathrm{um}(\rvz)}\}.
\end{equation*}
By Bayes' rule, this implies
\begin{equation*}
\mathbb{P}(\rvx_0=\rvx \mid \tilde{\rvz}_{t_j}=\rvz)
\;\propto\;
p_{\mathrm{data}}(\rvx)\,\mathbf{1}\{\rvx_{\mathrm{um}(\rvz)} = \rvz_{\mathrm{um}(\rvz)}\}.
\end{equation*}
In particular, for any masked coordinate $i \in \mathrm{msk}(\rvz)$ and any $v \in V$,
$\mathbb{P}(\rvx_0^i=v \mid \tilde{\rvz}_{t_j}=\rvz) = p(x_0^i=v \mid \rvz)$ by the definition of the ground-truth unmasking posterior.

Now fix $i \in \mathrm{msk}(\rvz)$ and $v \in V$.
Using the above conditional identity and the fact that $I_j$ is drawn from $g_\phi(\cdot \mid \rvz,t_j)$ given the current state,
the teacher-forced transition probability is
\begin{equation*}
\mathbb{P}\!\left(\tilde{\rvz}_{t_{j+1}} = \rvz^{(i \leftarrow v)} \mid \tilde{\rvz}_{t_j}=\rvz\right)
=
g_\phi(i \mid \rvz,t_j)\,\mathbb{P}(\rvx_0^i=v \mid \tilde{\rvz}_{t_j}=\rvz)
=
g_\phi(i \mid \rvz,t_j)\,p(x_0^i=v \mid \rvz).
\end{equation*}
On the other hand, the idealized inference chain satisfies
\begin{equation*}
\mathbb{P}\!\left(\rvz_{t_{j+1}} = \rvz^{(i \leftarrow v)} \mid \rvz_{t_j}=\rvz\right)
=
g_\phi(i \mid \rvz,t_j)\,p(x_0^i=v \mid \rvz),
\end{equation*}
since it samples the same index distribution and then draws the revealed token from the same posterior.
Therefore, the two chains have identical transition kernels and the same initial condition
$\rvz_{t_0}=\tilde{\rvz}_{t_0}=(m,\dots,m)$.
It follows by induction on $j$ that $q_{t_j} = \tilde q_{t_j}$ for all $j \in \{0,1,\dots,l\}$, and
$\mathbb{P}(\rvx_0^i=v \mid \tilde{\rvz}_{t_j}=\rvz) = p(\rvx_0^i=v \mid \rvz)$ by the definition of the ground-truth unmasking posterior.
\end{proof}

\setcounter{theorem}{\value{savedthm}}

\paragraph{Remark (continuous-time limit).}
The discrete-time proof above is stated on a grid and under the one-index-per-step idealization.
In the continuous-time limit $l \to \infty$, the corresponding dynamics converge to a continuous-time Markov chain on $\mathcal{Z}$.
The same argument applies at the level of rate matrices, which can be understood as the \emph{infinitesimal transition kernel}: conditioned on the current partially revealed state, the next revealed token in the teacher-forced construction has the same conditional distribution as sampling from the ground-truth posterior, hence the two generators coincide.
Therefore, the marginal agreement extends to all $t \in [0,1]$ in the continuous-time formulation.

\subsection{Proof of Proposition~\ref{prop:info_leak}}
This section contains the formal version of Proposition~\ref{prop:info_leak} and its proof.

We start with a general lemma about the minimizers of weighted cross-entropy loss functions.
\begin{lemma}[Weighted cross-entropy is minimized by the true conditional]\label{lemma:weighted-ce}
Let $(X,Y)$ be any jointly distributed random variables where $Y$ takes values in a finite set $\mathcal{V}$. Let $w(X)\ge 0$ be any measurable function.

Consider the functional
\begin{equation}\label{eq:weighted-ce}
\mathcal{L}(q) := \mathbb{E}\big[w(X)\,(-\log q(Y\mid X))\big],
\end{equation}
where $q(\cdot\mid x)$ ranges over all conditional distributions on $\mathcal{V}$.

Then any minimizer $q^\star$ satisfies
\begin{equation}
q^\star(\cdot\mid x) = \P(Y=\cdot\mid X=x)
\end{equation}
for $\P$-almost every $x$ such that $w(x)>0$. If $w(x)>0$ holds $\P$-almost surely, the minimizer is unique up to $\P$-a.s.\ equality.
\end{lemma}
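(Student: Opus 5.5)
The plan is to condition on $X$ and reduce the weighted loss to a pointwise problem about distributions on the finite set $\mathcal{V}$. Let $p(\cdot\mid x):=\P(Y=\cdot\mid X=x)$ be a regular conditional distribution; it exists because $\mathcal{V}$ is finite. By the tower property,
\begin{equation*}
\mathcal{L}(q)=\mathbb{E}\Big[w(X)\sum_{v\in\mathcal{V}} p(v\mid X)\,\bigl(-\log q(v\mid X)\bigr)\Big].
\end{equation*}
For each fixed $x$, the inner sum equals $H(p(\cdot\mid x)) + \mathrm{KL}\bigl(p(\cdot\mid x)\,\|\,q(\cdot\mid x)\bigr)$. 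This splits the loss as
\begin{equation*}
\mathcal{L}(q)=\mathbb{E}\bigl[w(X)\,H(p(\cdot\mid X))\bigr]+\mathbb{E}\bigl[w(X)\,\mathrm{KL}\bigl(p(\cdot\mid X)\,\|\,q(\cdot\mid X)\bigr)\bigr].
\end{equation*}
The first term does not depend on $q$. The second term is nonnegative, and it vanishes when $q=p$.

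From the decomposition, $q=p$ attains the smallest value of $\mathcal{L}$, namely the first term. Any other minimizer $q^\star$ must then make the KL term vanish, i.e.
\begin{equation*}
\mathbb{E}\bigl[w(X)\,\mathrm{KL}\bigl(p(\cdot\mid X)\,\|\,q^\star(\cdot\mid X)\bigr)\bigr]=0.
\end{equation*}
The integrand is nonnegative, so it is zero $\P$-a.s. On the event $\{w(X)>0\}$ this forces $\mathrm{KL}=0$, and Gibbs' inequality then gives $q^\star(\cdot\mid x)=p(\cdot\mid x)$ for a.e. such $x$. When $w>0$ almost surely, the event $\{w(X)>0\}$ has full measure, so the minimizer is unique up to a.s. equality.

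The main obstacle is the measure-theoretic bookkeeping around infinite values, not the core argument. I would first restrict attention to the case $\mathbb{E}[w(X)H(p(\cdot\mid X))]<\infty$, so that subtracting the entropy term is legitimate. This always holds when $w$ is bounded or integrable, since $H\le\log|\mathcal{V}|$, and that covers the MDM use case where $w=1/t$ on the masked coordinates. In the remaining case, where every $q$ gives infinite loss, I would interpret "minimizer" through the excess-loss (KL) term, or simply assume integrability of $w$.

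I would also note the convention $0\cdot\log 0=0$, which handles values $v$ with $p(v\mid x)=0$. Finally, Gibbs' inequality with equality only at $q=p$ holds on the finite simplex, so the equality case needs no further continuity argument.
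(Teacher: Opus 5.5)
Your proof is correct and follows the same route as the paper: condition on $X$, split the conditional cross-entropy into entropy plus $\mathrm{KL}$, and use nonnegativity of the weighted KL term together with Gibbs' equality case on $\{w>0\}$. Your extra care about the case $\mathbb{E}[w(X)H(p(\cdot\mid X))]=\infty$ is a real sharpening. In that case every $q$ has infinite loss, so the lemma as stated would fail, and the paper's proof implicitly assumes this term is finite.
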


\begin{proof}
Conditioning on $X=x$,
\begin{equation}
\mathbb{E}[-\log q(Y\mid X)\mid X=x]
=
H\big(\P(Y=\cdot\mid X=x)\big)
+
\mathrm{KL}\big(\P(Y=\cdot\mid X=x)\,\|\,q(\cdot\mid x)\big).
\end{equation}
Multiply by $w(x)\ge 0$ and take expectations over $X$. The entropy term is independent of $q$, while the KL term is nonnegative and equals $0$ iff $q(\cdot\mid x)=\P(Y=\cdot\mid X=x)$. This yields the claim and uniqueness on $\{w>0\}$.
\end{proof}

\setcounter{savedthm}{\value{theorem}} % store current theorem counter
\setcounter{theorem}{1}

\begin{proposition}[Formal]
    Let the masking process take the form
    \begin{equation*}
p(\rvx_t = \rvz \mid \rvx_0,t) \propto \alpha_t(\rvz)\cdot\mathbf{1}\{ \rvx_0^{\mathbf{um}(\rvz)} = \rvz^{\mathbf{um}(\rvz)} \},
\end{equation*}
where $\alpha_t:(\mathcal{V} \cup \{\mask\})^L\to [0,1]$ is a weighting function. Denote the density over $(x_0,t,x_t)$ induced by this forward process by $p_f$, i.e.
\begin{equation*}
    p_f(x_0, t, x_t) = p(x_t|x_0,t) p_{\mathrm{data}}(x_0)\mathbf{1}_{[0,1]}(t),
\end{equation*}
where $\mathbf{1}_{[0,1]}$ is the density of the uniform distribution over $[0,1]$. Then any minimizer $f_\theta^*$ of the training loss
\begin{equation*}
     \mathcal{L}(\theta) =  \mathbb{E}_{(\rvx_0,t,\rvx_t)\sim p_f}\biggl[\frac{1}{t}\sum_{i\colon \rvx_t^i=\mask} -\log f_\theta^i(\rvx_0^i \,|\,\rvx_t) \biggr]
\end{equation*}
is the ground-truth posterior of $p_{\mathrm{data}}$, and this posterior is equal to the distribution conditioned on the unmasked indices of $x_0$ and $x_t$ coinciding, i.e.
\begin{equation*}
    f_\theta^*(x_0 \mid x_t) = \mathbb{P}(x_0\mid x_t)\propto\mathbf{1}\{x_0^{\mathbf{um}(x_t)} = x_t^{\mathbf{um}(x_t)}\}\,p_{\mathrm{data}}(x_0).
\end{equation*}
\end{proposition}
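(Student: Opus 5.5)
The plan is to reduce the statement to Lemma~\ref{lemma:weighted-ce} by splitting the loss into per-position pieces, and then to compute the relevant conditional law with Bayes' rule, using that the weight $\alpha_t(\rvz)$ cancels.

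\textbf{Step 1: decouple the positions.} Write the loss as a sum over positions,
\begin{equation*}
\mathcal{L}(\theta)=\sum_{i=1}^L \mathbb{E}_{p_f}\Bigl[\tfrac{1}{t}\mathbf{1}\{\rvx_t^i=\mask\}\,\bigl(-\log f_\theta^i(\rvx_0^i\mid\rvx_t)\bigr)\Bigr].
\end{equation*}
We treat $f^i(\cdot\mid\rvz)$ as a free conditional distribution for each $i$ and each $\rvz$ (the nonparametric class). The $i$-th summand then depends only on $f^i$, so the loss is minimized by minimizing each summand separately.

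\textbf{Step 2: apply the lemma to each summand.} For a fixed $i$, apply Lemma~\ref{lemma:weighted-ce} with $X=(t,\rvx_t)$, $Y=\rvx_0^i$ and weight $w(X)=\frac1t\mathbf{1}\{\rvx_t^i=\mask\}$. This weight is positive almost surely on $\{\rvx_t^i=\mask\}$, since $t>0$ almost surely. The lemma gives that the minimizer equals $\P(\rvx_0^i=\cdot\mid t,\rvx_t)$ wherever position $i$ is masked.

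There is a technicality here: $f_\theta$ takes only $\rvx_t$ as input, not $t$. I will handle this in one of two ways. The first option is to show in Step~3 that the conditional law given $(t,\rvx_t)$ does not depend on $t$. The second is to apply the lemma with $X=\rvx_t$ and $Y=\rvx_0^i$ under the reweighted measure $\propto \frac1t\,p_f$, and then check that the $t$-weighting still cancels.

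\textbf{Step 3: Bayes computation and cancellation (the heart of the argument).} Let $Z_t(\rvx_0)=\sum_{\rvz}\alpha_t(\rvz)\mathbf{1}\{\rvx_0^{\mathbf{um}(\rvz)}=\rvz^{\mathbf{um}(\rvz)}\}$ be the normalizer of the forward process.

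The main obstacle is that this normalizer may depend on $\rvx_0$, and if it does, the "$\propto$" in the hypothesis could leak information about $\rvx_0$. I will argue that $Z_t$ is in fact independent of $\rvx_0$. The intended reading is that $\alpha_t(\rvz)$ depends on $\rvz$ only through its mask pattern and revealed values in a way that is symmetric across $\rvx_0$. This holds for the i.i.d.\ process, and for the teacher-forced chain in the proof of Proposition~\ref{prop:puma}, where the proportionality is actually an equality with $\alpha_j(\rvz)$. If a fully general $\alpha$ breaks this, I will state the hypothesis as an equality, as in that proof.

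Granting this, Bayes' rule gives
\begin{equation*}
p_f(\rvx_0\mid t,\rvx_t=\rvz)\propto \alpha_t(\rvz)\,\mathbf{1}\{\rvx_0^{\mathbf{um}(\rvz)}=\rvz^{\mathbf{um}(\rvz)}\}\,p_{\mathrm{data}}(\rvx_0).
\end{equation*}
The factor $\alpha_t(\rvz)$ is constant in $\rvx_0$, so it cancels under normalization whenever $\rvz$ has positive probability. The result is $\P(\rvx_0\mid\rvx_t=\rvz)\propto\mathbf{1}\{\rvx_0^{\mathbf{um}(\rvz)}=\rvz^{\mathbf{um}(\rvz)}\}p_{\mathrm{data}}(\rvx_0)$, with no dependence on $t$. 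The same holds under the $\frac1t$-reweighting, since that weight is a function of $t$ alone.

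\textbf{Step 4: conclude.} Marginalizing this posterior to coordinate $i$ gives exactly the ground-truth unmasking posterior $p(\rvx_0^i=\cdot\mid\rvz)$ defined at the start of this appendix. Combining this with Step~2 proves both the minimizer identity and the displayed form of the posterior. Uniqueness holds on the support of $\rvx_t$ under $p_f$.
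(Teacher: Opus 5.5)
Your proposal is correct and is essentially the paper's proof: Bayes' rule with cancellation of $\alpha_t(\rvz)$ gives the $t$-free data conditional, and then Lemma~\ref{lemma:weighted-ce} is applied coordinate by coordinate with weight $w_i(X)=\mathbb{E}[\tfrac1t\mathbf{1}\{X^i=\mask\}\mid X]$, which matches your second option for handling the fact that $f_\theta$ does not see $t$. The normalizer issue you flag is real and is silently assumed away in the paper's proof: its second $\propto$ needs the constant to be free of $\rvx_0$, and for general $\alpha_t$ it is not (e.g.\ with $L=1$, $\alpha_t(\mask)=\alpha_t(a)=1$, $\alpha_t(b)=0$, one gets $\mathbb{P}(\rvx_0=a\mid\rvx_t=\mask)=\frac{p(a)/2}{p(a)/2+p(b)}\neq p(a)$), so instead of trying to show $Z_t$ is $\rvx_0$-independent you should simply adopt the equality form, which is exactly what the teacher-forced chain supplies.
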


\begin{proof}
We start by showing the equality
\begin{equation*}
    \mathbb{P}(x_0\mid x_t)\propto\mathbf{1}\{x_0^{\mathbf{um}(x_t)} = x_t^{\mathbf{um}(x_t)}\}\,p_{\mathrm{data}}(x_0).
\end{equation*}

Fix $t\in(0,1)$ and a masked context $x_t\in(\mathcal V\cup\{\mask\})^L$.
Write $\mathbf{um}(x_t)\subseteq[L]$ for the unmasked coordinates and $\mathbf{msk}(x_t):=\{i:x_t^i=\mask\}$.
By assumption,
\begin{equation}\label{eq:likelihood-factor}
p(x_t \mid x_0,t)\ \propto\ \alpha_t(x_t)\,\mathbf{1}\{x_0^{\mathbf{um}(x_t)} = x_t^{\mathbf{um}(x_t)}\}.
\end{equation}

Then Bayes' rule gives, for any $x_0$,
\[
\mathbb P(x_0 \mid x_t,t)
\ \propto\
p(x_t\mid x_0,t)\,p_{\mathrm{data}}(x_0)
\ \propto\
\alpha_t(x_t)\,\mathbf{1}\{x_0^{\mathbf{um}(x_t)} = x_t^{\mathbf{um}(x_t)}\}\,p_{\mathrm{data}}(x_0),
\]
The factor $\alpha_t(x_t)$ does not depend on $x_0$ and cancels after normalization. Furthermore, the dependency on $t$ also disappears, yielding the \emph{data conditional} form
\begin{equation}\label{eq:true-posterior}
\mathbb P(x_0 \mid x_t)
=
\mathbb{P}(x_0\mid x_t, t)\propto\mathbf{1}\{x_0^{\mathbf{um}(x_t)} = x_t^{\mathbf{um}(x_t)}\}\,p_{\mathrm{data}}(x_0).
\end{equation}
In particular, the right-hand side depends on $x_t$ only through its revealed coordinates, and thus does not depend on $\alpha_t$.

Now let $(\rvx_0,t,\rvx_t)\sim p_f$. Fix a coordinate $i\in[L]$ and define the random variables
\[
X := \rvx_t,\qquad Y := \rvx_0^i\in\mathcal V.
\]
The contribution of coordinate $i$ to the loss is
\[
\mathcal L_i(\theta)
:=
\mathbb{E}_{X,Y,t}\Big[\frac{1}{t}\,\mathbf{1}\{X^i=\mask\}\,\big(-\log f_\theta^i(Y\mid X)\big)\Big].
\]
By equation~\eqref{eq:true-posterior}, we have $\mathbb P(Y=\cdot\mid X,t)=\mathbb P(Y=\cdot\mid X)$. Therefore, we can rewrite
\begin{align*}
\mathcal L_i(\theta)
&=
\mathbb{E}_{X,Y}\left[\mathbb{E}_t\left[\frac{1}{t}\,\mathbf{1}\{X^i=\mask\}| X\right]\left(-\log f_\theta^i(Y\mid X)\right)\right] \\
&=
\mathbb{E}_{X,Y}\Big[w_i(X)\left(-\log f_\theta^i(Y\mid X)\right)\Big],
\end{align*}
where $w_i(X):=\mathbb{E}_t\left[\frac{1}{t}\,\mathbf{1}\{X^i=\mask\}| X\right]\ge 0$ (note that we are assuming $t>0$).
Thus, by Lemma~\ref{lemma:weighted-ce}, any minimizer $f_\theta^*$ equals the true posterior $\mathbb{P}(x_0\mid x_t)$, which finishes the proof.
\end{proof}

\setcounter{theorem}{\value{savedthm}}

\subsection{Proof of Proposition~\ref{thm:complexity}}\label{apx:complexity}
In this section, we provide the proof of Proposition~\ref{thm:complexity}. We start off with a Chernoff bound lemma.

\begin{lemma}\label{lem:chernoff}
Let $P,Q$ be distributions on a finite space and let $Z_1,\dots,Z_T\sim P$ i.i.d.
Then
\[
\mathbb{P}\Big(\prod_{t=1}^T \frac{Q(Z_t)}{P(Z_t)} \ge 1\Big)
\ \le\ e^{-T C(P,Q)},
\]
where $C(P, Q)$ is the Chernoff information between $P$ and $Q$.
\end{lemma}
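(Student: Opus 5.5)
We will use the exponential Markov (Cramér--Chernoff) method. Write $\Lambda_T := \prod_{t=1}^T Q(Z_t)/P(Z_t)$, with the convention that $\Lambda_T$ is defined $P$-almost surely because each $Z_t$ lies in the support of $P$ almost surely. Recall the Chernoff information
\[
C(P,Q) = -\min_{\lambda\in[0,1]} \log \sum_{z} P(z)^{1-\lambda} Q(z)^{\lambda}.
\]
For any fixed $\lambda\in[0,1]$, the event $\{\Lambda_T\ge 1\}$ coincides with $\{\Lambda_T^{\lambda}\ge 1\}$ when $\lambda>0$. Markov's inequality then gives $\mathbb{P}(\Lambda_T\ge 1)\le \mathbb{E}_P[\Lambda_T^{\lambda}]$. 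The case $\lambda=0$ gives the trivial bound $1$, which is consistent because the exponent there is $-\log\sum_{z\in\mathrm{supp}P}P(z)=0$.

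Next, independence factorizes the expectation:
\[
\mathbb{E}_P[\Lambda_T^\lambda]=\Bigl(\sum_{z:P(z)>0} P(z)\,\bigl(Q(z)/P(z)\bigr)^{\lambda}\Bigr)^T=\Bigl(\sum_{z:P(z)>0} P(z)^{1-\lambda}Q(z)^\lambda\Bigr)^T .
\]
Since the space is finite, these sums are finite, and restricting to $\mathrm{supp}P$ only decreases the sum for $\lambda<1$. This yields
\[
\mathbb{P}(\Lambda_T\ge 1)\le \exp\Bigl(T\log \sum_z P(z)^{1-\lambda}Q(z)^\lambda\Bigr)
\]
for every $\lambda\in[0,1]$.

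Finally, we take the infimum over $\lambda\in[0,1]$ and recognize it as $e^{-T C(P,Q)}$. The infimum is attained because $\lambda\mapsto\log\sum_z P^{1-\lambda}Q^\lambda$ is continuous and convex on the compact interval $[0,1]$, by Hölder's inequality. This shows that the bound holds at the minimizing $\lambda^\star$.

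The only delicate point is the bookkeeping at the support boundaries, where $Q(z)=0$ or $P(z)=0$, and at the endpoint $\lambda=1$. At $\lambda=1$ the restricted sum is $Q(\mathrm{supp}P)\le 1$, so the bound remains valid. We handle all of these cases by defining the ratio only on $\mathrm{supp}P$ and using the conventions $0^\lambda=0$ for $\lambda>0$ and $0^0=1$. With these conventions, each inequality above is either an equality or moves in the favorable direction.
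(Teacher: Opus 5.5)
Your proof is correct and is essentially the paper's argument: Markov's inequality applied to $\Lambda_T^{\lambda}$, factorization of the expectation by independence, and optimization over $\lambda$; the paper simply restricts to $\lambda\in(0,1)$ and does not discuss supports. One small correction: you do not need the infimum to be attained, because a bound valid for every $\lambda\in(0,1)$ passes directly to the infimum. Your continuity claim can also fail: with the convention $0^0=1$, the map $\lambda\mapsto\sum_z P(z)^{1-\lambda}Q(z)^{\lambda}$ can jump at $\lambda=0$ when $P$ charges points outside $\mathrm{supp}\,Q$, so the infimum may only be approached as $\lambda\to 0^{+}$.
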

\begin{proof}
Recall that
\[
C(P,Q)\coloneqq -\log\Big(\inf_{s\in(0,1)} \sum_{z} P(z)^{1-s} Q(z)^{s}\Big).
\]
For any $s\in(0,1)$, Markov's inequality and independence of the $Z_t$ gives
\[
\mathbb{P}\Big(\prod_{t=1}^T \frac{Q(Z_t)}{P(Z_t)} \ge 1\Big)=
\mathbb{P}\Big(\Big(\prod_{t=1}^T \frac{Q(Z_t)}{P(Z_t)}\Big)^s \ge 1\Big)
\le
\mathbb{E}\Big[\prod_{t=1}^T \Big(\frac{Q(Z_t)}{P(Z_t)}\Big)^s\Big]=
\Big(\sum_z P(z)^{1-s} Q(z)^s\Big)^T.
\]
Minimizing over $s\in(0,1)$ yields the claim.
\end{proof}

For clarity, we restate our two assumptions, \textit{posterior collapse} and \textit{identifiability}.

\begin{assumption}[Posterior collapse]\label{ass:posterior}
    There exists a fixed distribution $\pi_0$ on $\mathcal{V}$ such that for all $\theta\in\Theta$ and all strict subsets $S\subsetneq[d]$, the posterior of $p_\theta$ at index $\pi^{(L)}$, i.e. the $Y$-index, conditioned on $S$ is identical to $\pi_0$; in other words, all latents need to be known in order to infer the observation.
\end{assumption}

\begin{assumption}[Identifiability]\label{ass:identifiability}
Let $P_\theta$ denote the joint law of $\pi(U_{1:d},Y)$ under $p_\theta$.
Assume there exists $\kappa>0$ such that for all distinct $\theta,\theta'\in\Theta$,
the \emph{Chernoff information} between $P_\theta$ and $P_{\theta'}$ is at least $\kappa$; this condition ensures $\theta$ can be learned.
\end{assumption}

Now we can prove Proposition~\ref{thm:complexity}.

\setcounter{savedthm}{\value{theorem}} % store current theorem counter
\setcounter{theorem}{2}

\begin{proposition}[Formal]
Let $\delta\in (0,1)$, and denote by $q$ the masking probability under random masking.
Under the above assumptions, random masking requires at least
$n = \Omega(q^{-1}(1-q)^{-d}\log(1/\delta))$ training samples to drive error below $\delta$,
whereas
$n_{\mathrm{traj}} = O((d+1)\log(|\Theta|/\delta)/\kappa)$ samples suffice for the MAP estimator when training with PUMA on oracle trajectories.
For fixed $q$, the dependence $q^{-1}(1-q)^{-d}$ is exponential in $d$.
\end{proposition}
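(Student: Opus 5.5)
Both bounds come from the same observation. Under posterior collapse (Assumption~\ref{ass:posterior}), the only training examples that carry information about $\theta$ are those in which the observation index $\pi^{(L)}$ is masked while \emph{all} $d$ latent indices are unmasked. If some latent is masked, the Bayes-optimal prediction at the $Y$-index is $\pi_0$, independently of $\theta$. Predictions at latent indices also do not depend on $\theta$ whenever $Y$ is masked, since $p_U$ is fixed. I would therefore formalize ``error'' as the probability that the learner's estimate of $\theta$ (equivalently, of the $Y$-posterior given full latents) is wrong. I would then count informative samples under each masking scheme.

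\textbf{Lower bound for random masking.} Under i.i.d.\ masking with probability $q$, a single sample is informative exactly when $Y$ is masked and all $d$ latents are revealed, which has probability $q(1-q)^d$. With $n$ samples, the probability that none is informative is $(1-q(1-q)^d)^n$. On that event, the training data carry no dependence on $\theta$: the $\theta$-free parts of the likelihood cancel, so the conditional law of the observed data is the same for all $\theta$. Consequently any estimator errs with constant probability, at least $1-1/|\Theta|$ under a uniform prior, or $1/2$ for two hypotheses. Requiring error at most $\delta$ forces $(1-q(1-q)^d)^n \lesssim \delta$. Using $-\log(1-x)\le 2x$ for small $x$, this gives $n \ge \Omega\bigl(q^{-1}(1-q)^{-d}\log(1/\delta)\bigr)$.

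The delicate point is making precise that uninformative samples contribute no likelihood ratio, including masked-$Y$ samples with partial latents, where the factorization of $p_\theta$ shows the joint law of revealed coordinates is $\theta$-free. To convert the error into a $\log(1/\delta)$ dependence rather than a constant, I would use a two-point argument: restrict to two parameters $\theta,\theta'$ and note that the error is at least $\tfrac12\,\mathbb{P}(\text{no informative sample})$.

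\textbf{Upper bound for PUMA on oracle trajectories.} Each execution of the oracle trajectory reveals the latents first and $Y$ last. Its state just before the final step therefore gives a fully revealed $(U_{1:d},Y)\sim P_\theta$, that is, one informative draw per trajectory. Since a trajectory counts as $L=d+1$ samples, $n_{\mathrm{traj}}$ samples yield $T=n_{\mathrm{traj}}/(d+1)$ i.i.d.\ draws from $P_{\theta^\star}$. The MAP estimator under a uniform prior, which is the MLE, errs only if some $\theta'\neq\theta^\star$ has likelihood ratio $\prod_t P_{\theta'}(Z_t)/P_{\theta^\star}(Z_t)\ge 1$. By Lemma~\ref{lem:chernoff} together with identifiability (Assumption~\ref{ass:identifiability}), each such event has probability at most $e^{-T\kappa}$. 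A union bound over $\Theta$ bounds the error by $|\Theta|e^{-T\kappa}$, which is at most $\delta$ once $T\ge \log(|\Theta|/\delta)/\kappa$. This gives $n_{\mathrm{traj}}=O\bigl((d+1)\log(|\Theta|/\delta)/\kappa\bigr)$, linear in $d$; the comparison with $(1-q)^{-d}$, exponential for fixed $q$, follows immediately.

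I expect the main obstacle to be the lower bound, specifically making the claim that ``uninformative samples carry no information about $\theta$'' rigorous at the level of the full training likelihood rather than just the Bayes predictor. I would handle it by showing directly from $p_\theta(\pi(x))=p_U(u)O_\theta(y\mid u)$ and Assumption~\ref{ass:posterior} that the marginal law of any partially revealed sample lacking a full latent set together with $Y$ is independent of $\theta$.
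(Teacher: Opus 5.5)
Your argument follows the paper's proof almost step for step. For random masking: the event $F$ ($Y$ masked, all latents revealed) has probability $q(1-q)^d$, there is no information about $\theta$ off $F$, the error is at least a constant times $(1-q(1-q)^d)^n$, and this is inverted to get $n$. For PUMA: one informative draw $Z\sim P_\theta$ per oracle trajectory, then the Chernoff lemma and a union bound. The upper bound is fine. Your $-\log(1-x)\le 2x$ step also goes in the correct direction, whereas the paper's $(1-a)^n\ge e^{-an}$ is reversed (harmlessly).

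The gap is the claim that $F$-samples are the \emph{only} informative ones, and the paper's proof shares it. You check latent-index predictions only ``whenever $Y$ is masked,'' so the case where $Y$ is revealed and exactly one latent $U_i$ is masked is never handled. In that case, applying posterior collapse to the denominator gives
\[
p_\theta(u_i\mid y,u_{-i})=\frac{p_U(u_i\mid u_{-i})\,O_\theta(y\mid u)}{\pi_0(y)}.
\]
So the likelihood ratio of this loss term between $\theta'$ and $\theta$ is exactly $P_{\theta'}(u,y)/P_\theta(u,y)$, and such a sample is as informative as an $F$-sample. In the $\mathbb{Z}_m$ example this conditional visibly flips under $\theta\mapsto\theta+\Delta$. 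Once two or more of the $L=d+1$ positions are masked, every per-coordinate conditional is $\theta$-free. The informative event is therefore ``exactly one position masked,'' which has probability $(d+1)q(1-q)^d$. Your argument then only yields $n=\Omega\bigl((d+1)^{-1}q^{-1}(1-q)^{-d}\log(1/\delta)\bigr)$. This is still exponential in $d$, but a factor $d+1$ weaker than stated. Moreover, the MAP on the masked-position loss terms attains this rate up to $\kappa$ and $|\Theta|$ factors, so the stated bound cannot be proved in that model.

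The same issue undermines your plan for the ``delicate point.'' Requiring that the law of the revealed coordinates be $\theta$-free is the wrong criterion, because it misclassifies samples in both directions. The revealed part of an $F$-sample is $U_{1:d}\sim p_U$, which is uninformative by that test. A fully revealed sample, by contrast, has law $P_\theta$. The information lives in the targets, so you must specify which targets the learner sees:
\begin{itemize}
\item If it sees all of $\mathbf{x}_0$, every sample is informative and there is no exponential lower bound at all.
\item If it sees only the masked-position loss terms, you get the $(d+1)q(1-q)^d$ count above.
\item To recover the stated $q(1-q)^d$, you must restrict the learner to the $Y$-index loss term. This is what the paper tacitly assumes, and you would need to state it explicitly.
\end{itemize}
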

\begin{proof}
    First, we prove the exponential bound for random masking.
    Denote by $F$ the event that all latents $U_{1:d}$ are unmasked, and $Y$ is masked, in the random masking setting. Then $\mathbb{P}(F)=q(1-q)^d$, hence $\mathbb{P}(\neg F)=1-q(1-q)^d$. Hence, the probability of $n$ i.i.d. masking samples all being equal to $\neg F$ is $(1-q(1-q)^d)^n$. Note that conditioned on $\neg F$, the distribution $p_\theta$ is identical across all $\theta\in\Theta$, since $p_U$ (i.e. the distribution of the latents) is independent of $\theta$ by assumption, and the conditional distribution of $Y$ is independent of $\theta$ by Assumption~\ref{ass:posterior}. Hence, if all samples are masked as $\neg F$, no estimator can recover $\theta$ better than random chance over $\Theta$. Thus,
    \[
\mathbb{P}(\widehat{\theta}\neq\theta)\ \ge\ \Big(1-\frac{1}{|\Theta|}\Big)\big(1-q(1-q)^d\big)^n,
\]
and $(1-a)^n\ge e^{-an}$ yields
\[
\mathbb{P}(\widehat{\theta}\neq \theta)
\ \ge\
\Big(1-\frac{1}{|\Theta|}\Big)\exp\!\big(-nq(1-q)^d\big).
\]
In particular, to make $\mathbb{P}(\widehat{\theta}\neq\theta)\le \delta$ it is necessary that
\[
n \ \ge\ \frac{1}{q(1-q)^d}\,\log\Big(\frac{1-1/|\Theta|}{\delta}\Big).
\]

Next, we prove the linear bound for masking with oracle trajectories.
Let $\theta$ be the data generating parameter, and $Z_1, ..., Z_T\sim P_{\theta}$, where $T$ is the number of samples drawn from the distribution, i.e. the number of training samples (counted as the number of steps in trajectories) equals $n_{\mathrm{traj}}=LT$. We now consider the $Z_i$ to be the \textit{informative} training steps in the oracle trajectories, i.e. where all $U_i$ are revealed and $Y$ is masked. This has the same distribution as $P_\theta$, so we can apply the Chernoff lower bound from Assumption~\ref{ass:identifiability}. Let $\theta'\neq \theta$ be a wrong hypothesis. Each of the $T$ oracle trajectories contains one step where all the latents are unmasked and $Y$ is masked; 
The event that $\theta'$ has likelihood at least $\theta$ implies
\[
\prod_{t=1}^T \frac{P_{\theta'}(Z_t)}{P_{\theta}(Z_t)} \ge 1.
\]
By Lemma~\ref{lem:chernoff} and Assumption~\ref{ass:identifiability},
\[
\mathbb{P}(\text{MAP selects $\theta'$ instead of $\theta$})
\le
e^{-T C(P_\theta,P_{\theta'})}
\le
e^{-T\kappa}.
\]
A union bound over the $|\Theta|-1$ wrong parameters yields
\begin{equation*}
    \mathbb{P}(\theta'\neq \theta)\le (|\Theta|-1)\,e^{-T\kappa}.
\end{equation*}
In particular, to ensure that $\mathbb{P}(\theta'\neq \theta)\le \delta$ it suffices to take
\[
T\ \ge\ \frac{1}{\kappa}\big(\log(|\Theta|-1)+\log(1/\delta)\big),
\]
which gives
\begin{align*}
    n_{\mathrm{traj}}&=(d+1)T = O\!\left((d+1)\frac{\log(|\Theta|/\delta)}{\kappa}\right).\qedhere
\end{align*}
\end{proof}

\setcounter{theorem}{\value{savedthm}} % restore

\begin{remark}
For simplicity, we defined $p_\theta$ over a fixed permutation $\pi$. However, the above results can also be extended to a dataset where permutations can vary, e.g. by being drawn from a predefined small set of permutations.
\end{remark}

Next, we give a simple example distribution that shows that the setting and assumptions above can indeed be instantiated in practice, and that ground-truth confidence-unmasking yields oracle trajectories as above in practice.
\begin{example}\label{ex:Zm_short}
Let $m\ge 4$ be even, $\mathcal V=\mathbb Z_m$, and set $\Delta\coloneqq m/2$.
Take $\Theta=\{0,\Delta\}$ and $d\ge 1$.
Draw latents $U_1,\dots,U_d\stackrel{i.i.d.}{\sim}\mathrm{Unif}(\{0,\Delta\})$ and independent noise $E$ with
$\mathbb P(E=0)=1-\eta$ and $\mathbb P(E=a)=\eta/(m-1)$ for $a\neq 0$, where $\eta\in(0,1/2)$.
Define
\[
Y \coloneqq \theta + \sum_{j=1}^d U_j + E \pmod m.
\]

\emph{Collapse.} For any strict $S\subsetneq[d]$, the sum of the missing latents $\sum_{j\notin S}U_j$ is supported on $\{0,\Delta\}$
and its law is invariant under shifts by $\Delta$; hence $Y\mid U_S$ has the same law for $\theta=0$ and $\theta=\Delta$.
Thus Assumption~\ref{ass:posterior} holds (with $\pi_0$ equal to this common conditional law).

\emph{Separability.} At the informative step ($U_{1:d}$ revealed, $Y$ masked in the input), the observed pair $(U_{1:d},Y)$ has
two hypotheses that differ by shifting the noise by $\Delta$. Evaluating the Chernoff coefficient at $s=1/2$ gives
\[
C(P_\theta^{\mathrm{inf}},P_{\theta'}^{\mathrm{inf}})\ \ge\ -\log B \eqqcolon \kappa>0,
\quad
B = 2\sqrt{(1-\eta)\frac{\eta}{m-1}}+(m-2)\frac{\eta}{m-1},
\]
thus Assumption~\ref{ass:identifiability} holds as well.

\emph{Ground-truth confidence order.} Each latent has distribution $\mathrm{Unif}(\{0,\Delta\})$, so the ground truth confidence for each latent, defined as the highest probability of any token in $\mathcal{V}$ for that latent, is $1/2$; since latents are independent, this does not change when some latents have already been revealed.
If at least one latent is missing, then $Y$ equals a (known shift of) $E$ convolved with $\mathrm{Unif}(\{0,\Delta\})$,
so $\mathrm{conf}(Y)=\frac12(1-\eta)+\frac12\frac{\eta}{m-1}<1/2$. Once all latents are revealed, the confidence of $Y$ is $1-\eta > 1/2$ instead;
hence any rule that unmasks positions with highest ground-truth confidence reveals latents before $Y$.
\end{example}

%% file: B_Training.tex
\section{Training Details and Experiments}\label{apx:training}
\subsection{Omitted details of PUMA.}\label{apx:b1}
The overall PUMA pipeline follows Algorithm~\ref{code_puma}, with one detail on how the number of newly unmasked tokens $|\mathcal{S}_j|$ is determined at each step.

\textbf{Summary.} The basic idea behind the design is twofold:
First, we aim to unmask a roughly uniform number of tokens $K$ at each stage of the diffusion process, while introducing controlled stochasticity in the exact number of tokens unmasked. This stochasticity is important in practice: deterministically unmasking the same number of tokens at every stage can lead to an imbalance in the distribution of masking patterns encountered during training.

Second, in addition to stage-based unmasking, we optionally apply confidence-based thresholding, which further unmasks tokens whose confidence scores exceed a fixed threshold (see Section~\ref{sec:practical-interventions}).

\textbf{Stage partitioning.}
Fix a $K$ such that $L/K$ is an integer. We partition the interval $[0,1]$ into $L/K$ uniform subintervals  $I_\ell := \left[\frac{\ell}{L/K}, \frac{\ell+1}{L/K}\right]$ for  $\ell = 0, \dots, L/K-1$. For an intermediate masked sequence $\rvx_t$, we define its \emph{stage} $n \in \{0,\dots,L/K-1\}$ based on the fraction of unmasked tokens. Concretely, $\rvx_t$ is said to be at stage $n$ if the number of unmasked (i.e., clean) tokens lies in the range $\left[ L_{\mathrm{eff}} \cdot \frac{n}{L/K},
L_{\mathrm{eff}} \cdot \frac{n+1}{L/K}\right]$, where $L_{\mathrm{eff}}$ denotes the effective sequence length, i.e., the number of tokens in $\rvx_0$ excluding prompt tokens.

\textbf{Teacher-forced chain movement.} Assume we are given a tuple $(\rvx_t, n, \rvx_0)$ consisting of the current masked sequence, its stage index, and the corresponding clean sequence. To advance the teacher-forced chain from stage $n$ to (approximately) stage $n+1$, we proceed as follows:

\begin{enumerate}
    \item Sample target unmasking ratio: Sample $r\sim  \mathrm{Unif}(I_{n+1})$, which specifies the desired fraction of unmasked tokens for the next stage. 
    \item Determine the number of tokens to unmask: Let $U_t$ denote the number of currently unmasked tokens in $\rvx_t$. The target number of unmasked tokens is $L_{\mathrm{eff}} \cdot r$, and hence the number of newly unmasked tokens is $\Delta U = L_{\mathrm{eff}} \cdot r - U_t$.
    \item Generate the next state: We unmask $\Delta U$ tokens according to the PUMA unmasking policy, obtaining the next intermediate sequence, which we nominally associate with stage $n+1$.
\end{enumerate}

\textbf{Interaction with confidence thresholding.}
When confidence thresholding is enabled, we additionally unmask all tokens whose confidence scores exceed a fixed threshold $\tau$. Since this step may unmask more tokens than prescribed by $\Delta U$, the total number of unmasked tokens can exceed $L_{\mathrm{eff}} \cdot r$. In this case, after all unmasking operations are completed, we recompute the stage index based on the final number of unmasked tokens, ensuring that the stage assignment remains consistent with the definition above.

\textbf{Training configuration.}
Table~\ref{tab:trainingconf} shows our training and evaluation hyperparameters for both our Sudoku and TinyGSM experiments. All hyperparameters are identical across PUMA and the MDM baseline (except for PUMA-specific parameters, such as the $K$-schedule, which do not apply to the baseline).

\subsection{Additional experimental results} \label{appendix:additional_results}

\textbf{Discussion on the block diffusion experiment.}
Block diffusion additionally \emph{predefines} the inference-time unmasking order (block-by-block). This rigidity can partially counteract PUMA’s speed-up, which in part comes from aligning the training-time masking process with the inference-time unmasking policy. Consequently, while we do not sweep over block sizes, we expect PUMA’s incremental gain to decrease as the block size becomes smaller (i.e., as the inference order becomes more constrained). 

Importantly, our main claim (Section~\ref{sec:remedy}) is that PUMA is complementary to block-size warmup: the warmup schedule necessarily passes through a regime with relatively large blocks, and we expect this is precisely the regime where PUMA’s policy-alignment effect applies most strongly.

\textbf{Sudoku training speedup.} Figure~\ref{fig:sudoku-speedup} shows the training speedup achieved by PUMA over the MDM baseline in our training run from Section~\ref{sec:sudoku}.

\textbf{Sudoku training trajectories.}
Figure~\ref{fig:trajectories} shows the distance from the training trajectories found by PUMA over the course of training to the trajectories found by the final model. Here, the distance at training time $t$ is defined as

\begin{equation}\label{eq:traj-dist}
d_{\mathrm{abs}}(t,T)
\;:=\;
\mathbb{E}_{x_0}\!\left[\frac{1}{81}\sum_{(i,j)\in[9]\times[9]}
\bigl|u^{t}_{ij}(x_0)-u^{T}_{ij}(x_0)\bigr|\right],
\end{equation}
where $T$ is the time at the end of training, and $u^t_{ij}$ is the integer \textit{step} in the trajectory at which cell $(i,j)$ is unmasked; e.g., if a cell is unmasked at the $k$th step in the trajectory, $u^t_{ij}=k$. In Figure~\ref{fig:trajectories}, we average over 100 (fixed) samples. The figure shows that the distance decreases rapidly early during training, which validates using the online trajectories from the current model for teacher-forced chains. However, interestingly, the distance saturates at around $0.5$, which means a small gap in trajectory distance persists.

\begin{figure}[h]
    \centering
    \includegraphics[width=0.5\linewidth]{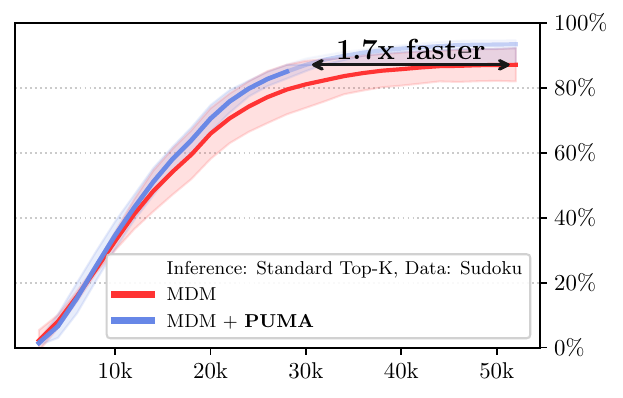}
    \caption{\textbf{PUMA speeds up training on Sudoku by $1.7\times$.}}
    \label{fig:sudoku-speedup}
\end{figure}
\begin{figure}[h]
    \centering
    %\hspace{-1cm}
    \vspace{0.3cm}
    \includegraphics[width=.5\linewidth]{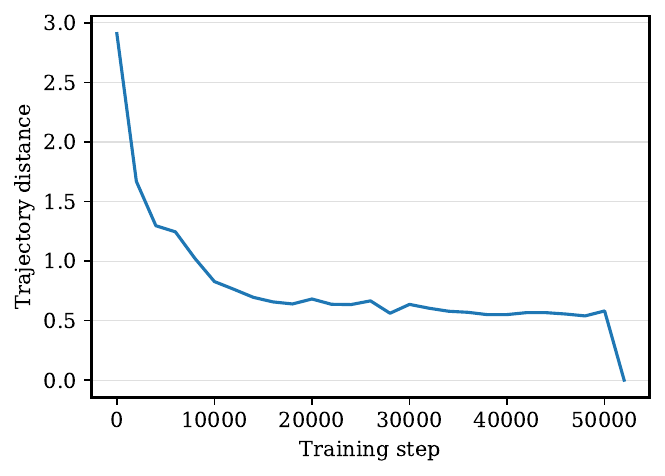}
    \caption{\textbf{Training trajectory distance to the fully trained model diminishes quickly.} We show the distance~\eqref{eq:traj-dist} between the training trajectory found by PUMA over the course of training, averaged over a fixed set of 100 samples on Sudoku.}
    \label{fig:trajectories}
\end{figure}

\begin{figure}[h]
\centering
\begin{subfigure}{0.48\textwidth}
    \includegraphics[width=\linewidth]{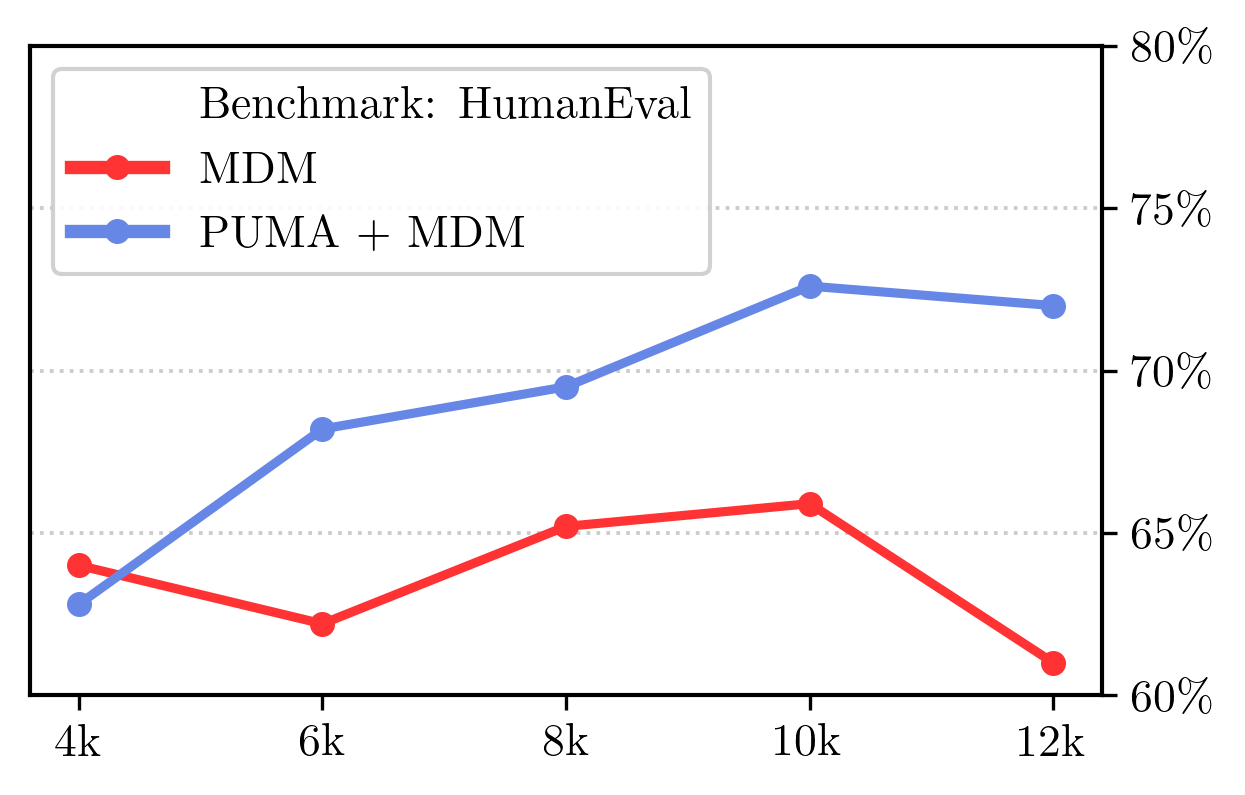}
\end{subfigure}
\begin{subfigure}{0.48\textwidth}
     \includegraphics[width=\linewidth]{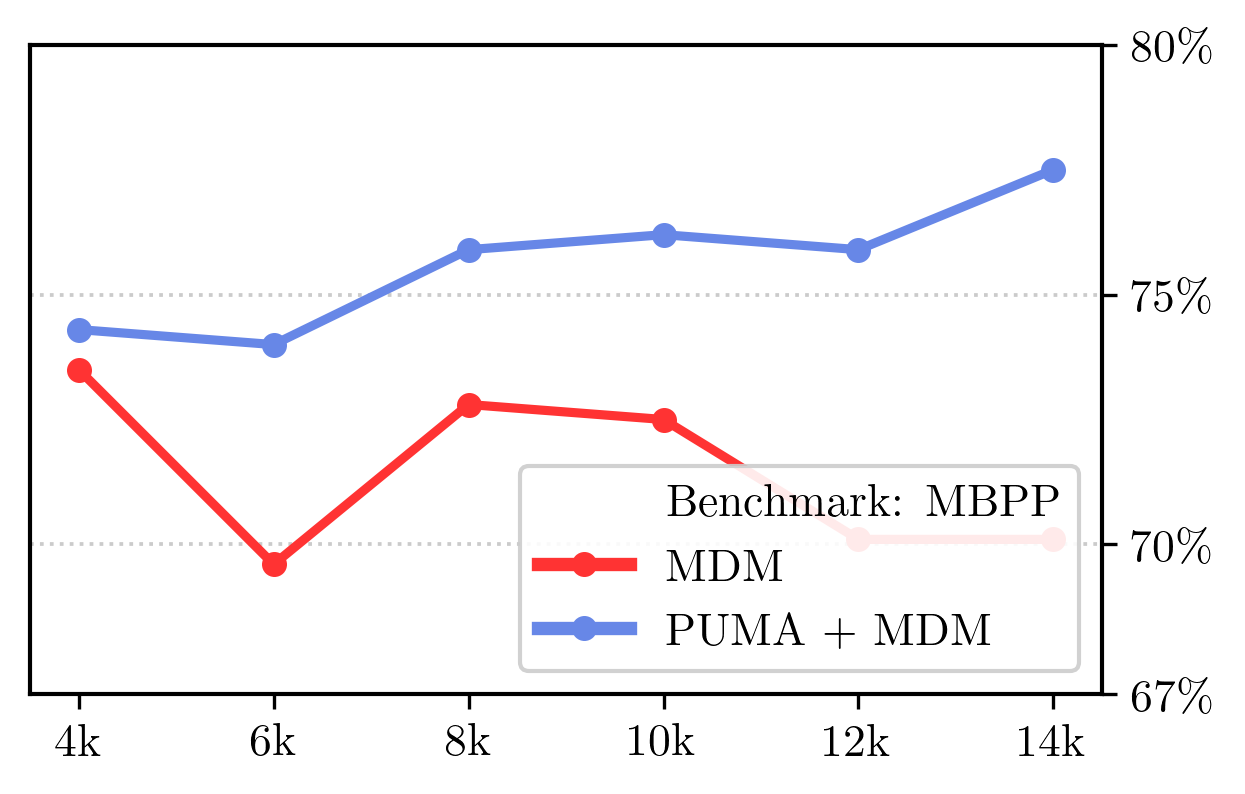}
\end{subfigure}
\caption{\textbf{PUMA outperforms the MDM baseline on Dream-Coder fine-tuning across benchmarks.}} \label{fig:dream_coder}
\end{figure}

\subsection{$K$ schedule ablation}\label{apx:k-schedule}
In this section, we provide an ablation over the $K$ schedule on TinyGSM. In Figure~\ref{fig:k-schedule}, we compare our $K$ schedule to other variants, including some that decrease $K$ more or less aggressively than our schedule. We can see that while most schedules are competitive, our standard schedule has a slight edge. Furthermore, schedules that decrease $K$ too aggressively hurt performance noticeably.

\begin{figure}
    \centering
    \includegraphics[width=0.7\linewidth]{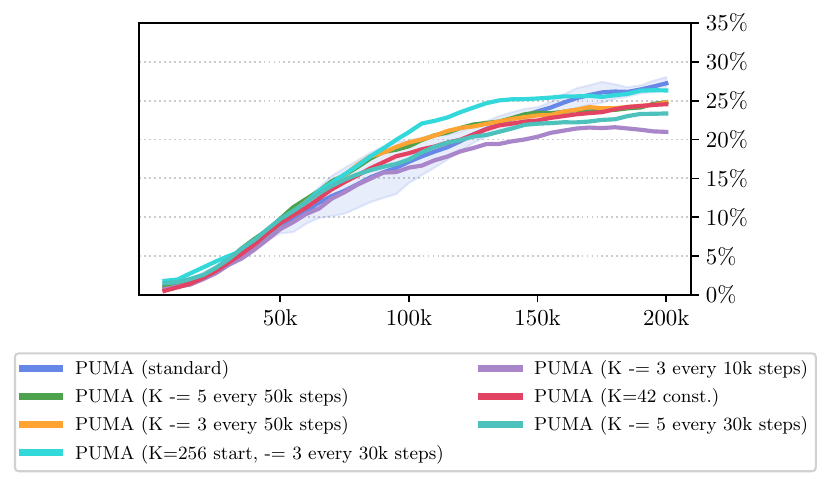}
    \caption{\textbf{Our standard $K$-schedule outperforms other $K$-schedule variants.} The standard $K$-schedule starts at 42 and decreases by 3 every 30k steps, down to 12. Each schedule starts at 42, except for the one that explicitly notes $K=256$ at start.}
    \label{fig:k-schedule}
\end{figure}

\subsection{Ablation over inference-time unmasking policies}\label{apx:unmasking-policies}

In Table~\ref{tab:inference-policies}, we compare PUMA against the MDM baseline for various inference policies. This includes confidence-fast-forwarding policies, where, akin to training, we unmask all tokens above a certain confidence threshold in each step (falling back to the two highest-confidence tokens if none are above the threshold); varying values of $K$ in the Top-$K$ inference policy; and temperature sampling, where in each step, tokens are sampled according to the specified temperature in all positions, and the top two tokens according to model confidence are unmasked.

The results show that PUMA uniformly outperforms the MDM baseline across all inference strategies; we note that some strategies lead to bad performance of both methods, such as very large values of $K$, which is to be expected. Furthermore, Table~\ref{tab:inference-policies} shows that for the confidence-fast-forwarding strategies, where the number $K$ of tokens unmasked in each step is not fixed, PUMA tends to unmask approximately $10\%$ more tokens per step than the MDM baseline. We attribute this to the fact that PUMA explicitly trains on inference-aligned trajectories, which should make PUMA more familiar with the structured partially-masked states encountered at inference, leading to higher confidence overall.

\begin{table}[t]
\centering
\caption{\textbf{PUMA consistently outperforms the MDM baseline across all inference policies.} We ablate (a) the confidence threshold $\tau$ in confidence-fast-forwarded inference (akin to training), (b) the number of unmasked tokens $K$, and (c) temperature sampling, where we first sample tokens according to the specified temperature and then unmask the top two tokens based on their confidence scores. Tokens/step are reported where $K$ is not fixed.}
\label{tab:inference-policies}
\begin{tabular}{llcccc}
\toprule
\multirow{2}{*}{Ablation} & \multirow{2}{*}{Setting}
& \multicolumn{2}{c}{\textbf{Accuracy}}
& \multicolumn{2}{c}{\textbf{Tokens / step}} \\
\cmidrule(lr){3-4} \cmidrule(lr){5-6}
& & PUMA & MDM Baseline & PUMA & MDM Baseline \\
\midrule
\multirow{3}{*}{\textbf{Confidence fast-forward}}
& $\tau = 0.80$ & 36.3 & 31.0 & 5.3 & 5.0 \\
& $\tau = 0.90$ & 37.5 & 33.1 & 3.8 & 3.4 \\
& $\tau = 0.95$ & 36.9 & 32.8 & 3.3 & 3.1 \\
\midrule
\multirow{4}{*}{\textbf{Top-$\mathbf{K}$}}
& $K = 2$  & 39.1 & 33.6 & -- & -- \\
& $K = 5$  & 37.5 & 33.5 & -- & -- \\
& $K = 10$ & 28.0 & 22.4 & -- & -- \\
& $K = 20$ & 14.1 & 8.6  & -- & -- \\
\midrule
\multirow{5}{*}{\textbf{Temperature top-2}}
& $T = 0.2$ & 37.5 & 33.0 & -- & -- \\
& $T = 0.5$ & 37.0 & 32.8 & -- & -- \\
& $T = 0.7$ & 37.8 & 31.8 & -- & -- \\
& $T = 0.8$ & 37.6 & 33.7 & -- & -- \\
& $T = 1.0$ & 39.5 & 33.4 & -- & -- \\
\bottomrule
\end{tabular}
\end{table}

\begin{figure*}[h]
    \centering
    % ---------- Row 1 ----------
    \begin{subfigure}[t]{0.48\textwidth}
        \centering
        \includegraphics[width=1.05\linewidth,
            trim=-0.2cm 0.2cm 0cm 0cm]{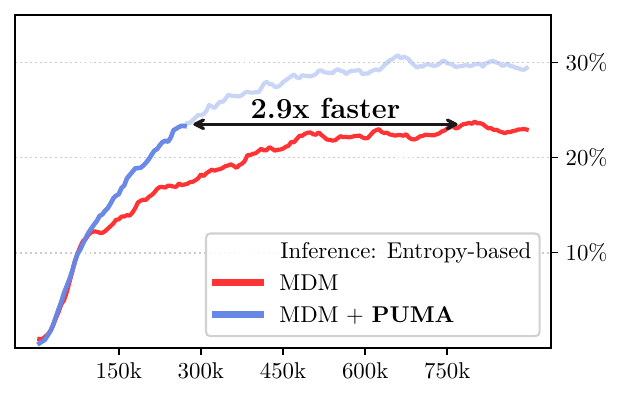}
    \end{subfigure}
    \hfill
    \begin{subfigure}[t]{0.48\textwidth}
        \centering \includegraphics[width=1.05\linewidth,
            trim=-0.2cm 0.2cm 0cm 0cm]{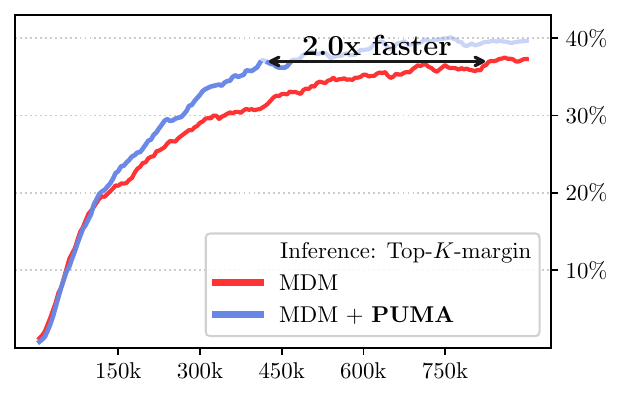}
    \end{subfigure}

    \vspace{0.25cm}

    % ---------- Row 2 ----------
    \begin{subfigure}[t]{0.48\textwidth}
        \centering\includegraphics[width=1.05\linewidth,
            trim=-0.2cm 0.2cm 0cm 0cm]{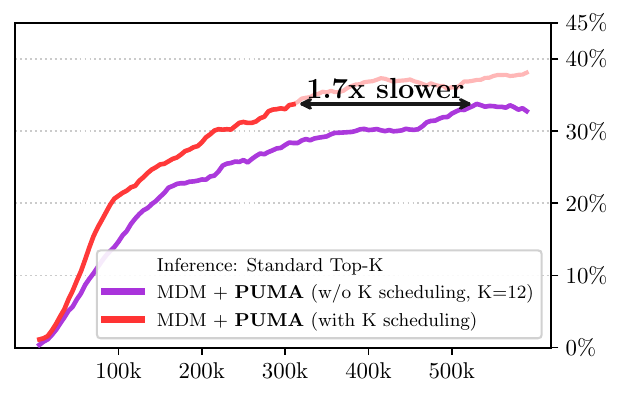}
    \end{subfigure}
    \hfill
    \begin{subfigure}[t]{0.48\textwidth}
        \centering
\includegraphics[width=1.05\linewidth,
            trim=-0.2cm 0.2cm 0cm 0cm]{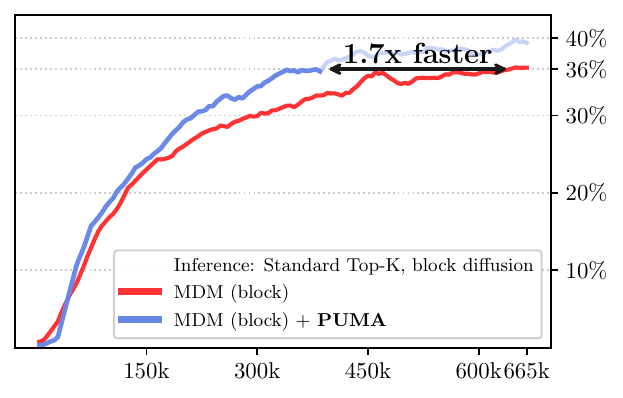}
    \end{subfigure}

    \caption{\textbf{Lower Left}: $K$ scheduling is an important factor for PUMA's efficiency; removing scheduling (fixed $K=12$) degrades iteration-accuracy efficiency. (\coloredul{xxpurple}{purple curve}). \textbf{Upper}: A single PUMA-trained model (trained under one policy) remains robust to inference-time policy choices, consistently outperforming the baseline across different unmasking policies. Figure~\ref{fig:2} shows the iteration-accuracy curves with $|S|= 2$. Here, we present the curves with $|S|=3$ under the Top-K margin (Upper Right) and the entropy-based (Upper Left) policy. \textbf{Lower Right}: PUMA also yields speedup for block diffusion training with block size $256$.}
    \label{fig:appendix_results}
    \vspace{-0.1in}
\end{figure*}

\subsection{Empirical comparison to PAPL~\citep{peng2025planner}}\label{apx:papl}
In this section, we define the PAPL training objective and empirically compare it to PUMA and the MDM baseline on both Sudoku and TinyGSM.
Recall the MDM training loss:
\begin{equation*} 
    \mathcal{L}(\theta) =  \mathbb{E}_{(\rvx_0,t,\rvx_t)}\biggl[\frac{1}{t}\sum_{i\colon \rvx_t^i=\mask} -\log f_\theta^i(\rvx_0^i \,|\,\rvx_t) \biggr]\,.
\end{equation*}

PAPL reweights the same masked cross-entropy terms according to the model's confidence in the correct token at each masked position. In our notation, define
\[
w_t^i
=
\frac{
f_\theta^i(\rvx_0^i | \rvx_t)
}{
\sum_{j\colon \rvx_t^j=\mask}
f_\theta^j(\rvx_0^j | \rvx_t)
},
\]
so that positions where the denoiser already assigns higher probability to the clean token receive larger weight. The PAPL objective can then be written as
\begin{equation*} 
    \mathcal{L}_{\text{PAPL}}(\theta) =  \mathbb{E}_{(\rvx_0,t,\rvx_t)}\biggl[\frac{1}{t}\sum_{i\colon \rvx_t^i=\mask}\left(1+\alpha w_t^i\right) \left(-\log f_\theta^i(\rvx_0^i \,|\,\rvx_t)\right) \biggr],
\end{equation*}
where $\alpha>0$ controls the strength of the planner-aware reweighting. The authors recommend $\alpha=1$ as a default, but sometimes observe improved performance for values up to $\alpha=5$; increasing $\alpha$ beyond $5$ generally hurts performance.

In Figure~\ref{fig:papl-ablations}, we compare PAPL against the MDM baseline and PUMA on both Sudoku and TinyGSM, including a sweep across $\alpha=1,\ 3,\ 5$ on Sudoku. Across both datasets and training paradigms, we observe that PAPL does not improve training efficiency, and oftentimes even seems to hurt.

\begin{figure}[t]
    \centering
    \begin{subfigure}[t]{0.48\linewidth}
        \includegraphics[width=\linewidth]{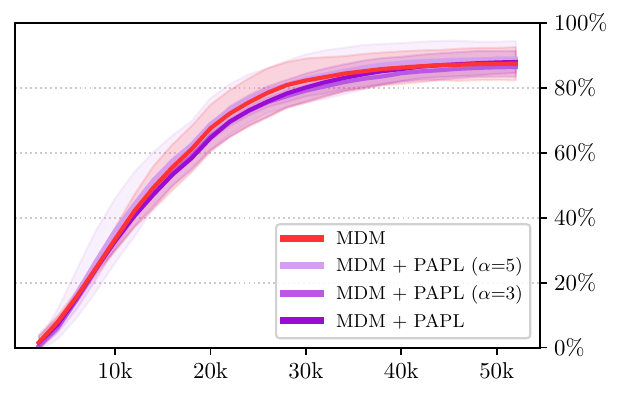}
        \caption{Sudoku, baseline + PAPL sweep}
        \label{fig:sudoku-baseline-papl}
    \end{subfigure}
    \hfill
    \begin{subfigure}[t]{0.48\linewidth}
        \includegraphics[width=\linewidth]{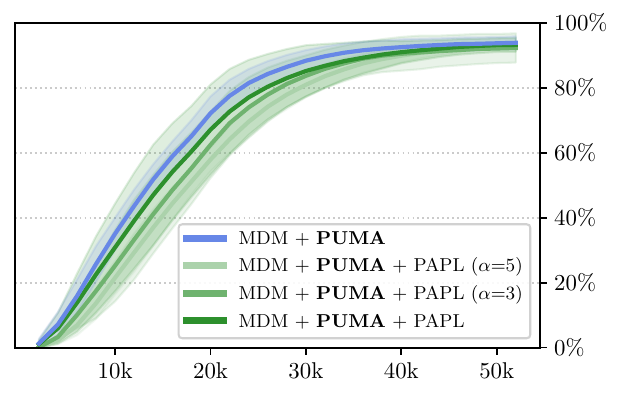}
        \caption{Sudoku, PUMA + PAPL sweep}
        \label{fig:sudoku-puma-papl}
    \end{subfigure}

    \vspace{0.5em}

    \begin{subfigure}[t]{0.48\linewidth}
        \includegraphics[width=\linewidth]{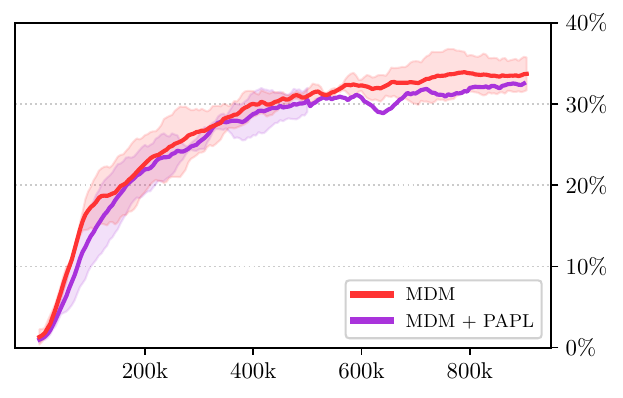}
        \caption{TinyGSM, baseline vs.\ baseline + PAPL}
        \label{fig:tinygsm-baseline-papl}
    \end{subfigure}
    \hfill
    \begin{subfigure}[t]{0.48\linewidth}
        \includegraphics[width=\linewidth]{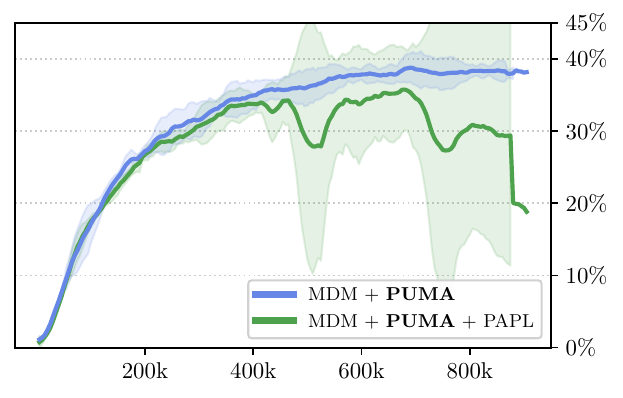}
        \caption{TinyGSM, PUMA vs.\ PUMA + PAPL}
        \label{fig:tinygsm-puma-papl}
    \end{subfigure}

    \caption{\textbf{PAPL does not improve MDM training on Sudoku or TinyGSM.} This is consistent for both the MDM baseline and PUMA models. We plot $95\%$ CIs across three random seeds. We note that the poor performance of PAPL on TinyGSM with PUMA was caused by one of the three seeds; the other two performed just slightly worse than PUMA.}
    \label{fig:papl-ablations}
\end{figure}

\subsection{Discussion on evidence lower bound}
\label{ref:app_elbo}

We evaluated the MDM NELBO on our TinyGSM validation dataset ($236$k samples) with PUMA and the MDM baseline. Figure~\ref{fig:nelbo}, left, shows the result. The vanilla MDM training attains a lower NELBO than PUMA, and this is explainable as the NELBO equals the training objective of the baseline. However, even though the NELBO does not exactly align with the PUMA training objective, PUMA is reasonably good at minimizing the NELBO as well, and attains a final NELBO of around $0.06$, while the MDM baseline attains $0.05$.

Moreover, we conduct the same comparison on Dream-Coder-7B, with a $35$k validation set and three random time steps per sequence. The result is shown in Figure~\ref{fig:nelbo}, right. Here, we observe a different trend: PUMA yields a lower NELBO than vanilla fine-tuning. We hypothesize this is driven by two related factors. First, while PUMA unmasks tokens in \emph{easy} orders, our confidence-fast-forwarded training ensures that easy positions are unmasked early, so the model can spend more time on hard tokens. Second, random unmasking might spend compute on patterns that the model has already learned successfully. Both of these factors can lead to more meaningful gradient updates for PUMA and could explain the lower NELBO.

\begin{figure}[h]
\centering
\begin{subfigure}{0.48\textwidth}
    \includegraphics[width=\linewidth]{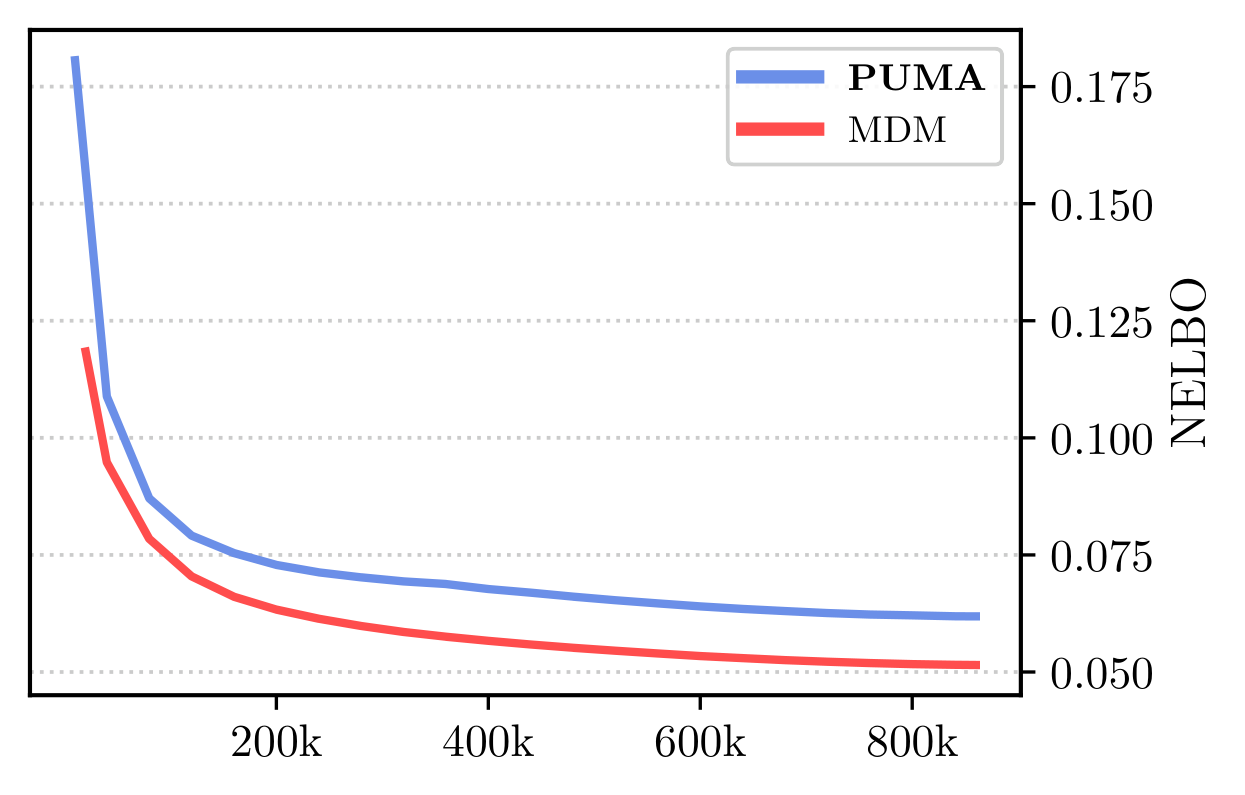}
\end{subfigure}
\begin{subfigure}{0.48\textwidth}
     \includegraphics[width=\linewidth]{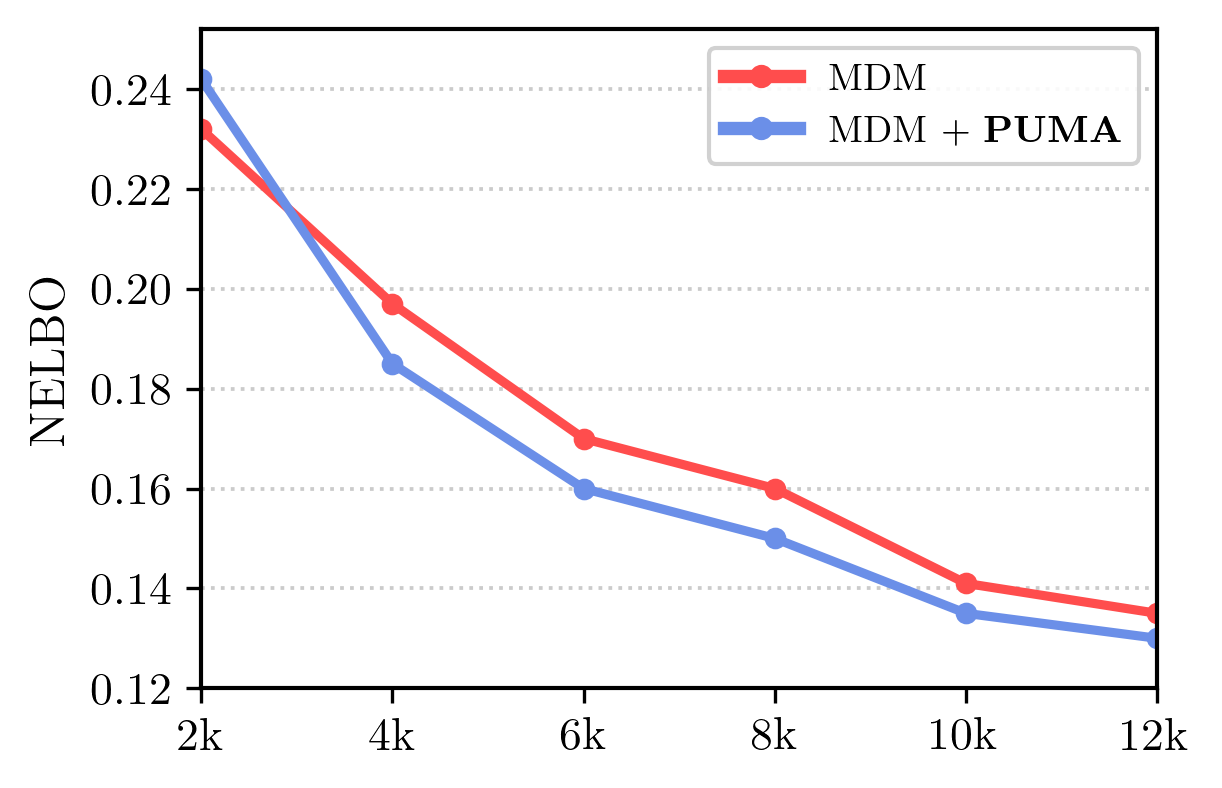}
\end{subfigure}
\caption{\textbf{Left:} NELBO comparison on TinyGSM. \textbf{Right:} NELBO comparison on Dream-Coder fine-tuning.}  \label{fig:nelbo}
\end{figure}

\begin{table}[h]
\centering
\vspace{0.5cm}
\caption{Training and evaluation hyperparameters on Sudoku, TinyGSM, and Dream-Coder.}
\label{tab:trainingconf}
\begin{tabular}{lccc}
\hline
 & \textbf{Sudoku} & \textbf{TinyGSM} & \textbf{Dream-Coder} \\
\hline
Tokenizer & Custom & Qwen/Qwen2-0.5B & DreamTokenizer \\
Vocab size & 10 & 151645 & 152064 \\
\hline
Model size & 6.8M & 125M & 7.6B \\
Hidden dim & 256 & 512 & 3584 \\
MLP dim & 768 & 1536 & 18944 \\
Number of layers & 8 & 14 & 28 \\
Number of heads & 8 & 8 & 28 \\
Max length & 81 & 512 & 640 \\
RMSNorm epsilon & $1\mathrm{e}{-6}$ & $1\mathrm{e}{-6}$ & $1\mathrm{e}{-6}$ \\
Dropout & 0 & 0 & 0 \\
\hline
Confidence threshold & 0.9 & 0.9 & 0.9 \\
K schedule & 10 (fixed) & 42--12 (in decrements of 3 every 30k steps) & 6 (fixed) \\
Learning rate & $3\mathrm{e}{-4}$ & $3\mathrm{e}{-4}$ & $1\mathrm{e}{-5}$ \\
Warmup steps & 1000 & 1000 & 3010 \\
EMA value & 0.9999 & 0.9999 & N/A \\
Weight decay & 0.01 & 0.01 & 0.01 \\
Epochs & 2 & 20 & 10 \\
Max grad norm & 1 & 1 & 1 \\
Batch size (per GPU) & 32 & 32 & 32 \\
Number of GPUs & 2 & 8 & 8 \\
Sampling temperature & 0.0 & 0.0 & 0.2 / 0.1 \\
\hline
\end{tabular}
\end{table}